\definecolor{mygray}{gray}{.9}
\newcommand\blue{\textcolor{blue}}
\newcommand\red{\textcolor{red}}
\theoremstyle{thmstyleone}%
\newtheorem{theorem}{Theorem}
\newtheorem{proposition}[theorem]{Proposition}%
\theoremstyle{thmstyletwo}%
\theoremstyle{thmstylethree}%
\newtheorem{definition}{Definition}%
\begin{document}

\title[Article Title]{Saving for the future: Enhancing generalization via partial logic regularization}


\author[1,2]{\fnm{Zhaorui} \sur{Tan}}\email{Zhaorui.Tan21@student.xjtlu.edu.com}
\equalcont{These authors contributed equally to this work.}

\author[1,2]{\fnm{Yijie} \sur{Hu}}\email{Yijie.Hu20@student.xjtlu.edu.com}
\equalcont{These authors contributed equally to this work.}

\author*[1]{\fnm{Xi} \sur{Yang}}\email{Xi.Yang01@xjtlu.edu.cn}

\author[1]{\fnm{Qiufeng} \sur{Wang}}\email{Qiufeng.Wang@xjtlu.edu.cn}

\author[2]{\fnm{Anh} \sur{Nguyen}}\email{Anh.Nguyen@liverpool.ac.uk}

\author*[3]{\fnm{Kaizhu} \sur{Huang}}\email{kaizhu.huang@dukekunshan.edu.cn}


\affil*[1]{\orgname{Xi'an Jiaotong-Liverpool University}, \orgaddress{\city{Suzhou}, \country{China}}}

\affil[2]{\orgname{University of Liverpool}, \orgaddress{\city{Liverpool}, \country{United Kingdom}}}

\affil[3]{\orgname{Duke Kunshan University }, \orgaddress{\city{Suzhou}, \country{China}}}


\abstract{
Generalization remains a significant challenge in visual classification tasks, particularly in handling unknown classes in real-world applications. Existing research focuses on the class discovery paradigm, which tends to favor known classes, and the incremental learning paradigm, which suffers from catastrophic forgetting. Recent approaches such as the L-Reg technique employ logic-based regularization to enhance generalization but are bound by the necessity of fully defined logical formulas, limiting flexibility for unknown classes.
This paper introduces PL-Reg, a novel partial-logic regularization term that allows models to reserve space for undefined logic formulas, improving adaptability to unknown classes. 
Specifically, we formally demonstrate that tasks involving unknown classes can be effectively explained using partial logic. We also prove that methods based on partial logic lead to improved generalization.
We validate PL-Reg through extensive experiments on Generalized Category Discovery,  Multi-Domain Generalized Category Discovery, and long-tailed Class Incremental Learning tasks, demonstrating consistent performance improvements. Our results highlight the effectiveness of partial logic in tackling challenges related to unknown classes.
}

\keywords{Generalization, visual classification, generalized category discovery (GCD), multi-domain generalization (mDG), class incremental learning (CIL)}



\maketitle

\section{Introduction}\label{Introduction}

\begin{figure*}[t]
\centering
\includegraphics[width=\linewidth]{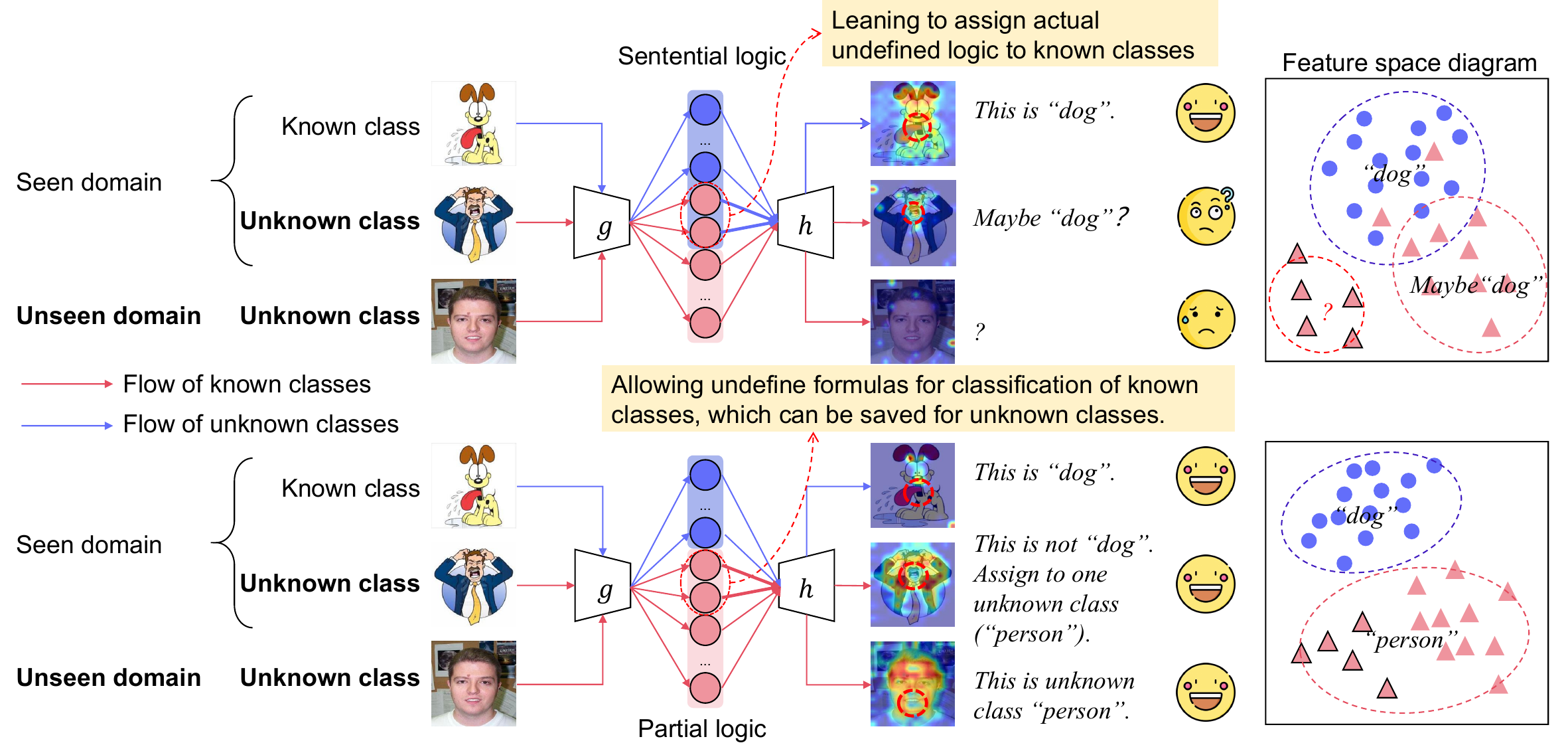}
\caption{
Diagrams of sentential logic- and partial logic-based visual classification, respectively.
}\label{fig:banner}
\end{figure*}

Generalization presents a persistent challenge in applying machine learning and deep learning models, particularly in visual classification tasks~\cite{gulrajani2020search,kukleva2021generalized,liu2021isometric,vaze2022generalized,cha2022miro,abbe2023generalization,tan2024rethinking,tan2024interpret}. As models are increasingly deployed in real-world scenarios, their ability to generalize to unknown classes becomes a critical factor in their success. Current research efforts in this area can be broadly categorized into two major branches: class discovering and incremental learning. 

The class-discovering paradigm enables models to identify unknown classes in unlabeled data without explicit supervision. A  fundamental task within this domain is Novel Class Discovery (NCD)~\cite{han2021autonovel}, where all unlabeled samples belong to unknown classes. This task is extended in Generalized Category Discovery (GCD)~\cite{vaze2022generalized}, encompassing a mixture of known and unknown classes within unlabeled samples. 
More recently, mDG+GCD~\cite{tan2024interpret} further complicates the setting by incorporating multi-domain generalization, allowing unknown classes to appear across unseen domains.
Alternatively, the incremental learning paradigm is designed to progressively adapt models to emerging classes. Class Incremental Learning (CIL)~\cite{rebuffi2017icarl,masana2022class,zhou2024class} simulates scenarios where novel classes are introduced sequentially, requiring models to integrate new knowledge while preserving prior information. However, traditional CIL assumes balanced data distributions, a condition that is rarely satisfied in practice. To address this limitation, ordered and shuffled long-tailed CIL~\cite{tao2020few,liu2022long} introduce class imbalances across sessions, posing greater challenges for generalization.

Despite advancements in class discovering and incremental learning, challenges persist in handling unknown classes across various scenarios. Class-discovering methods often exhibit bias toward known classes, significantly degrading the performance of unknown ones. Meanwhile, incremental learning methods struggle with catastrophic forgetting, losing previously learned knowledge while adapting to novel classes.
L-Reg~\cite{tan2024interpret} is a noteworthy attempt to mitigate challenges by introducing logic-based regularization to enhance interpretability and improve generalization in GCD and mDG+GCD.
Nevertheless, its reliance on fully defined logical formulas, i.e., \textit{sentential logic},  either imposes meanings on undefined terms or eliminates them. This reliance constrains the model's capability of unknown classes. 
As shown in \cref{fig:banner} top, methods based on sentential logic such as L-Reg~\cite{tan2024interpret}  often require full use of extracted features, causing semantic overlap among classes. These overlaps limit features to specific known classes, hindering the model's ability to adapt to novel classes, particularly in CIL, where logical formulas must evolve, potentially limiting the model's generalization.
Therefore, introducing partial-logic systems is essential to handle evolving class definitions and improve model flexibility and generalization. 

Drawing inspirations from the logic-based analysis introduced in L-Reg~\cite{tan2024interpret},
this paper unveils that the key to tackling this challenge is to reserve some room for the unknown classes during the training; such rationale can be well explained by \textit{partial logic} that allows the existence of currently undefined logic formulas. 
Through incorporating partial logic, models are able to retain undefined logic formulas of unknown classes and preserve them for the future stage where they will be defined. 
As one important contribution, we provide a theoretical analysis based on the logical analysis framework, showcasing that the problems of the aforementioned tasks can be formed under the scope of partial logic. Specifically, we prove that partial-logic based methods lead to improved generalization.
As shown in \cref{fig:banner} bottom, the partial logic-based method allows undefined formulas that would not be assigned to a certain known class at the current stage. Such reserved semantics hold great significance in that they may be strategically used to classify or cluster future unknown classes.

Building on the foundations of the partial logic,
we propose a novel partial-logic regularization term, named PL-Reg, apply it to a range of tasks, and validate our theoretical findings. 
To convey a comprehensive study, we conduct experiments with the following tasks that involve challenges about unknown classes:
\begin{itemize}
    \item \textbf{GCD}: In GCD tasks, models are trained on both labeled and unlabeled data from known and unknown classes. They classify unlabeled samples from known classes and cluster those for each unknown class. The total number of classes, including known and unknown, is given. 
    \item \textbf{mDG+GCD}: mDG+GCD tasks extend GCD by using labeled and unlabeled samples from several seen domains for training. The model classifies unlabeled samples from known classes and clusters those for each unknown class, but testing occurs on unseen domains.
    \item \textbf{Long-tailed CIL}: CIL tasks train models over several sessions. In each session, the model needs to learn to classify classes that are previously unknown, typically without access to past session samples. Long-tailed CIL tasks further complicate this as imbalanced data are involved.
\end{itemize}
As demonstrated from extensive results on the \textbf{GCD}, \textbf{mDG+GCD}, and \textbf{CIL} tasks shown, PL-Reg can lead to consistent improvements across various settings and datasets, confirming the superiority of taking partial logic into consideration. Our code is available at \url{https://github.com/zhaorui-tan/PL-Reg}.



\section{Related work}\label{Related work}

\subsection{Logical reasoning and visual tasks}

Current studies focus on length generalization~\cite{abbe2023generalization,ahuja2024provable,abbe2024provable,xiao2024theory} or symbolic reasoning~\cite{boix2023can,li2024neuro} in the logic-based scope, which are closely related to languages. 
For visual tasks, \cite{barbiero2022entropy} 
delves into the logical explanations in image classification by explicitly extracting logical relationships.

L-Reg~\cite{tan2024interpret} makes a first attempt to introduce the logical-based regularization through correlating the logical analysis framework with visual classification tasks. Moreover, L-Reg is able to improve generalization with interpretability. However, this paper unveils that L-Reg, based on sentential (binary) logic, may not be suitable for the scenarios where the unseen samples and unknown classes are incrementally given, thus possibly hindering the transferability of the model trained with L-Reg. 



\subsection{Generalization in visual classification tasks}
Following \cite{tan2024interpret}, the generalization settings for visual classification can be categorized as data-shift setting, target-shift setting, and all-shift setting, which is a combination of both. 
In this paper, we put our focus on those settings that involve target shifts, such as the generalized category discovery task for target-shift setting and combining multi-domain generalization and Generalized category discovery for all-shift setting.
Moreover, we revisit the Class Incremental Learning task as an extended target shift scenario.

\textbf{Multi-domain generalization.} 
Multi-domain generalization (mDG) frames classification tasks as classifying samples from unseen domains where the target shifts are significant.
Current approaches for mDG in image classification focus on learning invariant representation across domains~\cite{yuan2023domain}. 
Previous approaches like DANN~\cite{ganin2016domain} minimize feature divergences between source domains. CDANN~\cite{li2018deep}, CIDG~\cite{li2018domain}, and MDA~\cite{hu2020domain} consider conditions for learning conditionally invariant features. MIRO~\cite{cha2022miro} and GMDG~\cite{tan2024rethinking} take advantage of pre-trained models to improve generalization.

\textbf{Generalized category discovery.}
As a typical example of target-shift setting, 
generalized category discovery (GCD) is a task where a model trained on a partially labeled dataset needs to classify known classes covered by the labeled set and cluster unknown classes for unlabeled data. Pioneered by~\cite{vaze2022generalized} that addresses unlabeled samples with both known and unknown classes, PIM~\cite{chiaroni2023parametric}  integrates InfoMax into generalized category discovery, effectively handling imbalanced datasets and surpassing GCD on both short- and long-tailed datasets.

\textbf{Combining multi-domain generalization and generalized category discovery.}
A more complex task combing multi-domain generalization and generalized category discovery (mDG+GCD) is introduced by \cite{tan2024interpret}. In this task, the model is trained on partially labeled data from seen domains but tested on unseen domains for known class classification and unknown class clustering to validate its generalization ability.

\textbf{Class incremental learning.}
Class Incremental Learning (CIL) seeks to learn novel classes sequentially without access to previously encountered data, where the target shifts happen during incremental learning sessions. CIL aims to learn novel classes and retain the old knowledge in incremental sessions~\cite{rebuffi2017icarl, rolnick2019experience, zhu2021class, zhou2022model,xuan2024incremental}. One more realistic yet crucial task in CIL is to learn incoming novel classes with imbalanced distributions, namely long-tailed class incremental learning (LTCIL)~\cite{liu2022long}, where novel class data is heavily imbalanced compared to old class data, or the data distributions are imbalanced in each incremental stage. To tackle the imbalance issue, prevailing wisdom~\cite{liu2022long, kalla2024robust} first adopts various re-sampling methods to balance the data distribution in either input space or feature space, then learns the feature extractor and classifier via a two-stage framework. After training, the model is tested on all seen classes to verify its performance under the continual target shifts.


\section{Partial logical regularization}
\label{Method}


\textbf{Notations.}
Consider a set of paired data $(X, Y) \sim (\mathcal{X},   \mathcal{Y})$ represents inputs and corresponding labels.  Two distinct subsets are identified: $(X_k, Y_k) \sim (\mathcal{X}_k, \mathcal{Y}_k)$ represents the seen and known paired subsets, and $(X_u, Y_u) \sim (\mathcal{X}_u, \mathcal{Y}_u)$ represents the unseen paired subsets. Note that $X_u$ or $Y_u$ may be available independently, yet their pairing relationships remain undisclosed.
Here, $Y$  is presumed as a finite set. 
Let $D$ denote the finite set of possible domains, with $D_k$ and $D_u$ representing the seen and unseen domains, respectively. For classification tasks, an encoding function $g(x) \to Z \in \mathbb{R}^M$ is employed to map the input sample $x$ into a latent feature space $Z$, where each latent feature has $M$ dimensions. A predictor $h(Z) \to \hat{Y} \in \mathbb{R}^K$ is then used to map the latent features $Z$ into predictions $\hat{Y}$, where $K$ denotes the number of classes and the dimensionality of the predictions.
Typically,  $h$ is assumed to be a linear model.
Additionally, $P(\cdot)$ and $H(\cdot)$ represent probability and entropy, respectively.

In most cases discussed within this paper, it should be clarified again that $Y$ is confined to a finite set and the discussion is limited to single-label classification tasks, i.e., each input $x \in X$  is only given one ground truth label $y\in Y$. 

\subsection{Sentential logic for visual classification}
We begin by highlighting the connections between logical reasoning and visual classification tasks. 
As provided in \cite{tan2024interpret}, the logic defined on the given images and targets $(X,Y)$ can be formed as follows:
\begin{definition} 
\label{def:logic}
Following~\cite{andreka2017universal}, a logic $\mathcal{L}_{(X,Y)}$ is a five-tuple defined in the form:
\begin{equation}
\begin{split}
    \label{eq:logic}
    \mathcal{L}_{(X,Y)}= &
    \left\langle 
    F_{\mathcal{L}_{(X,Y)}},  
    M_{\mathcal{L}_{(X,Y)}}, \right . \\ &
    \left .
    \models_{\mathcal{L}_{(X_k,Y_k)}},
    mng_{\mathcal{L}_{(X,Y)}}, 
    \vdash_{\mathcal{L}_{(X,Y)}}
    \right\rangle, \nonumber
\end{split}
\end{equation}
where
\begin{itemize}
    \item $F_{\mathcal{L}_{(X,Y)}}$ is a set of all formulas of $\mathcal{L}$ that is constructed on images and labels ($X,Y$).  
    \item $M_{\mathcal{L}_{(X,Y)}}$ is a class called the class of all models (or possible worlds) of $\mathcal{L}_{(X,Y)}$. $M_{\mathcal{L}_{(X,Y)}}$ can be understood as different domains $D$ of $X$ or various training sessions in CIL. 
     \item $\models_{\mathcal{L}_{(X,Y)}}$ is the binary validity relation of $\mathcal{L}_{(X,Y)}$ : $\models_{\mathcal{L}_{(X,Y)}} \subseteq M_{\mathcal{L}_{(X,Y)}} \times F_{\mathcal{L}_{(X,Y)}}$. 
     The validation relationship refers to the relationship between the ground truth label of the image being given being true and its negations being false.  
     \item $mng_{\mathcal{L}_{(X,Y)}}: F_{\mathcal{L}_{(X,Y)}} \times M_{\mathcal{L}_{(X,Y)}} \longrightarrow \text { Sets }$
    where Sets is the class of all sets.      
    $mng_{\mathcal{L}_{(X,Y)}}$ is called the meaning function of $\mathcal{L}_{(X,Y)}$. Intuitively, $mng_{\mathcal{L}}$ extracts the meaning of the expressions that can be understood as the classifiers.
    \item $\vdash_{\mathcal{L}_{(X,Y)}}$ represents the provability relation of $\mathcal{L}_{(X,Y)}$, telling us which formulas are ``true" in which possible world and usually is definable from  $mng_{\mathcal{L}_{(X,Y)}}$, such as the estimation in the machine learning system. 
\end{itemize}
\end{definition}

As shown in \cref{def:logic}, the meaning of the formulations presented in $\mathcal{L}_{(X,Y)}$ are binary, indicating that $\mathcal{L}_{(X,Y)}$  is sentential or propositional logic $\mathcal{L}_{S}$:
\begin{definition}
\label{def:sentential_logic}
Connectives of $\mathcal{L}_{S}$ are $\wedge,\vee$ and $\neg$.
Given the set of formulas ${F}_{S}$, the class $M_{S}$ of models is
\begin{equation}
 M_{S}\overset{\text{def}}{=} \left \{ f: f\in^{S} \left \{0,1\right \}  \right \} .
\end{equation}
Here, $0,1$ are intended to denote the truth values ``false'' and ``true'', respectively.
\end{definition}

L-Reg~\cite{tan2024interpret} is derived from \cref{def:logic,def:sentential_logic},  forcing the model to maintain a good general logic $\mathcal{L}^*$ from $\mathcal{L}_{(X,Y)}$ through forming \textit{atomic formulas} on the semantic representations obtained by the encoders and the predictions of the classifiers form them. Specifically, $ h\circ g(x) \text{ belongs/not belongs to class } y \text{ in domain } d \to Ture/False, \text{ where } x, y, d \in X,Y,D$ which makes that $ \vdash_{(h \circ g(X_u), Y_u)} = \models_{(g(X_k), Y_k)}$ still holds. Given this equivalence, as shown in L-Reg,  $ \vdash_{(h \circ g(X_u), Y_u)}$ can be omitted; consequently, all $\vdash_{\cdot}$ are omitted in this paper. 


\textbf{Limitations.}
\cref{def:sentential_logic} presumes that all formulations in $\mathcal{L}_{(X,Y)}$ are defined, i.e., meaningful in the given worlds/specific situations.
However, Copenhagen's interpretation of quantum mechanics points out that certain statements are meaningless in specific situations, whereas these statements may acquire definitions in future scenarios, such as 
during class discovering or incremental learning processes. 
Designs such as L-Reg, which are based on \cref{def:sentential_logic}, might either
force the model to assign the meaning to inherently undefined formulations or erase those formulations in the given situations.
As presented in \cref{fig:banner}, this could necessitate the allocation of overlapping semantics across categories to a specific known class. Both would limit the model's capability to adapt to newly defined formulations in novel situations. 

\subsection{Partial logic for visual classification tasks}


Partial logics are designed to capture the idea that, in certain contexts, some statements may be devoid of meaning.
Specifically, if a statement 
$\varphi$ is meaningless in the given scenarios, then its negation 
$\neg\varphi$ is also meaningless.
Following~\cite{andreka2017universal}, the partial logic $\mathcal{L}_{P}$ is defined as follows:
\begin{definition}
\label{def:partial_logic}
The logic $\mathcal{L}_{P}$ is characterized by its connectives $\wedge,\vee,\neg, N$, where $N(\varphi)$ denotes a strong negation, expressing that $\varphi$ is either false or undefined (i.e., ``It is not the case that $\varphi$"). The set of formulas $F_{P}$ extends $F_k$ (formulas of sentential logic) by incorporating the unary connective $N$.
The class $M_P$ of models is
\begin{equation}
 M_P \overset{\text{def}}{=} \left \{ f: f\in^{P} \left \{0,1,2\right \} 
 \right \} .
\end{equation}
Here, $0,1,2$ are used to denote the truth values ``false'', ``true'', and ``undefined'', respectively.
If $2 \notin \left \{mng_{P}(\varphi,f), mng_{P}(\psi ,f)  \right \}$ holds, then $mng_{P}$ of $\varphi \wedge \psi, \varphi \vee \psi, \neg \varphi$ follows the standard definitions in classical logic  $\mathcal{L}_k$. 
Conversely, if $2$ embodies one of the meanings,
then $mng_{P}$ of  $\varphi \wedge \psi, \varphi \vee \psi, \neg \varphi$ is all $2$, indicating that their meaning is not defined in the given $\mathcal{L}_k$:
\begin{equation}
\begin{split}
    mng_{P} (N_{\varphi}, f) 
    \overset{\text{def}}{=} 
    \left \{\begin{matrix}  
    &0, &\text{ if } mng_{P}(\varphi, f) =1 
     \\
    &1, &\text{ otherwise. } \hfill
    \end{matrix}\right. \\
    f \models_{P} \text{ iff } mng_{P}(\varphi, f) = 1. 
\end{split}
\end{equation}
With this, the partial logic is defined: $\mathcal{L}_{P} = \left \langle 
F_{P}, M_{P}, mng_{P}, \models_{P}
\right \rangle $.
\end{definition}

With \cref{def:partial_logic}, it is able to redefine the $\mathcal{L}_{(X,Y)}$ in the form of partial logic:
\begin{equation}
    \mathcal{L}_{(X,Y)}\!\mid\! m
    \!\overset{\text{def}}{=}\! \! \left \{ 
     \mathcal{L}_{S(X,Y)},  \mathcal{L}_{P(X,Y)} \!\setminus \mathcal{L}_{S(X,Y)}
    \right \} \!\mid\! m,
    \nonumber
\end{equation}
where $m \in M_{(X,Y)}$ denotes the set of the seen scenarios of all possible scenarios combinations;  $\mathcal{L}_{P(X,Y)} \setminus \mathcal{L}_{S(X,Y)}$ denotes the fully undefined logic set for the given seen scenarios $m$. Importantly,  $ \mathcal{L}_{(X,Y)}$ is a \textit{formal partial logic} if $\mathcal{L}_{P(X,Y)} \!\setminus \mathcal{L}_{S(X,Y)} \neq \emptyset$. Note here $\mathcal{L}_{(X,Y)}= =\mathcal{L}_{P(X,Y)}$.

This section subsequently demonstrates that the challenges of different visual tasks can be formed under the scope of partial logic. Following this, we further show that using partial logic improves the generalization ability of various models. 

\subsubsection{Partial logic in GCD tasks}

Consider all possible situations for the GCD task, denoted as $M_{GCD(X, Y)} = \left \{ m_{(g(X), Y_k)}, m_{(g(X), Y)} \right \} $. Here, $m_{(g(X), Y_k)} $ and $m_{(g(X), Y)} $ represent situations where the logic is constructed based on the extracted formulations with known classes $Y_k $ and all classes $Y = \left \{ Y_k, Y_u \right \} $, respectively. 

\textbf{Scenario 1.} Intuitively, while the given situations are $m_{(g(X), Y_k)}$, the corresponding obtained logic $\mathcal{L}_{GCD(X,Y)}\!\mid\! m_{(g(X), Y_k)}$ consists of 
\begin{equation}
\begin{split}
    &\left \{\mathcal{L}_{S(X,Y)},  \mathcal{L}_{P(X,Y)} \!\setminus \mathcal{L}_{S(X,Y)} 
    \right \} \!\mid\!  m_{(g(X), Y_k)}, \\
    &   \text{where }
    \mathcal{L}_{S(X,Y)}  \!\mid\!  m_{(g(X), Y_k)} \overset{def}{=} \mathcal{L}_{(g(X),Y_k)}, \\
    & \;\;\;\;\;\;\;\;\;\;  
    \mathcal{L}_{P(X,Y)} \!\setminus \mathcal{L}_{S(X,Y)}  \!\mid\!  m_{(g(X), Y_k)}  \overset{def}{=}  \\
    & \;\;\;\;\;\;\;\;\;\;\;\;\;\;  
    \mathcal{L}_{GCD(X,Y)}\!\setminus \mathcal{L}_{(g(X),Y_k)}. 
\end{split}
\end{equation}
Notably, it is obvious that $ \mathcal{L}_{P(X,Y)} \!\setminus \mathcal{L}_{S(X,Y)}  \!\mid\!  m_{(g(X), Y_k)} \neq \emptyset$ for the GCD task since the logic for unseen labels  $\mathcal{L}_{S(X,Y_u)}\neq \emptyset$ needs to be constructed in the future and $ \mathcal{L}_{S(X,Y_u)} \in  \mathcal{L}_{P(X,Y)} \!\setminus \mathcal{L}_{S(X,Y)}  \!\mid\!  m_{(g(X), Y_k)} $. 
Therefore,  $\mathcal{L}_{GCD(X,Y)}\!\mid\! m_{(g(X), Y_k)}$ is a formal partial logic.
Under these situations, the model is required to extract defined formulations for known classes.
However, if all extracted formulations are used for constructing $\mathcal{L}_{(g(X),Y_k)}$, there will be no room left for constructing $\mathcal{L}_{S(X,Y_u)}$.

\textbf{Scenario 2.}
Given that the situations are $m_{(g(X), Y)}$,  the logic $\mathcal{L}_{GCD(X,Y)}\!\mid\! m_{(g(X), Y)}$ is formed as:
\begin{equation}
\begin{split}
    &\left \{\mathcal{L}_{S(X,Y)},  \mathcal{L}_{P(X,Y)} \!\setminus \mathcal{L}_{S(X,Y)}
    \right \} \!\mid\!  m_{(g(X), Y)}, \\
    & \text{where } \mathcal{L}_{S(X,Y)} \!\mid\!  m_{(g(X), Y)} \overset{def}{=} \\
    & \;\;\;\;\;\;\;\;\;\;\;\;\;\;  
    \left \{\mathcal{L}_{(g(X),Y_k)}, \mathcal{L}_{(g(X),Y_u)} \right \}. 
\end{split}
\end{equation}
In such case $\mathcal{L}_{P(X,Y)} \!\setminus \mathcal{L}_{S(X,Y)} \!\mid\!  m_{(g(X), Y)} \neq \emptyset$ may hold for many realistic visual task applications where images $X$ often contain ``noises'' corresponding to a certain label, which can be interpreted as undefined formulations for the given task. 
Combining two scenarios, $\mathcal{L}_{GCD(X,Y)}\!\mid\! m_{(g(X), Y_k)}$ is a formal partial logic. 

\subsubsection{Partial logic mDG+GCD tasks}

The mDG+GCD task extends the GCD task by considering multiple domains. Specifically, the set of all possible situations for the mDG+GCD task is given by:
$M_{mDG+GCD(X,Y)} = \left \{ m_{(g(X),Y_k, D_k)}, m_{(g(X),Y, D)} \right \}$ where $D_k$ and $D=\left \{D_k, D_u\right \}$ denote seen domains and all domains, respectively. Typically, $D_k\neq \emptyset$ and $D_u \neq \emptyset$  are both finite sets.
Similar to the GCD tasks, we present the logic formulation for the mDG+GCD task under two scenarios. 

\textbf{Scenario 1.} Given that situations are $ m_{(g(X),Y_k, D_k)}$, the corresponding obtained logic $\mathcal{L}_{mDG+GCD(X,Y)}\!\mid\!  m_{(g(X),Y_k, D_k)}$ is 
\begin{equation}
\begin{split}
    &\left \{\mathcal{L}_{S(X,Y)},  \mathcal{L}_{P(X,Y)} \!\!\setminus \mathcal{L}_{S(X,Y)} 
    \right \} \!\!\mid\!  m_{(g(X),Y_k, D_k)}, \\
    & \text{where }
    \mathcal{L}_{S(X,Y)}  \!\mid\! m_{(g(X),Y_k, D_k)} \overset{def}{=}\\
    & \;\;\;\;\;\;\;\;\;\;\;\;\;\;  
    \mathcal{L}_{(g(X),Y_k, D_k)}, \\
    & \;\;\;\;\;\;\;\;\;\;  
    \mathcal{L}_{P(X,Y)}  \! \!\setminus \!  \mathcal{L}_{S(X,Y)}  \! \!\mid\!  m_{(g(X), Y_k, D_k)} \!\!\! \overset{def}{=} \!\! \\
    & \;\;\;\;\;\;\;\;\;\;\;\;\;\;  
    \mathcal{L}_{mDG+GCD(X,Y)}\!\setminus \mathcal{L}_{(g(X),Y_k, D_k)}. 
\end{split}
\end{equation}
It is also obvious that $ \mathcal{L}_{P(X,Y)} \!\setminus \mathcal{L}_{S(X,Y)}  \!\mid\!  m_{(g(X), Y_k, D_k)}\neq \emptyset$ due to the same reasons presented for the GCD task. 

\textbf{Scenario 2.}
When the situations are given as
$m_{(g(X), Y, D)}$,  the logic $\mathcal{L}_{mDG+GCD(X,Y)}\!\mid\! m_{(g(X), Y, D)}$ is formed as
\begin{equation}
\begin{split}
    &\left \{\mathcal{L}_{S(X,Y)},  \mathcal{L}_{P(X,Y)} \!\setminus \mathcal{L}_{S(X,Y)}
    \right \} \!\mid\!  m_{(g(X), Y, D)}, \\
    & \text{where } \mathcal{L}_{S(X,Y)} \!\mid\!  m_{(g(X), Y, D)} \overset{def}{=} \\
    & \;\;\;\;\;\;\;\;\;\;\;\;\;\;  
    \left \{\mathcal{L}_{(g(X),Y_k, D_k)}, \mathcal{L}_{(g(X),Y_u, D)} \right \}. 
\end{split}
\end{equation}
Similar to the Scenario 2 in GCD tasks, $\mathcal{L}_{P(X,Y)} \!\setminus \mathcal{L}_{S(X,Y)} \!\mid\!  m_{(g(X), Y, D)}$ is not empty for the task of mDG+GCD.
The significant reason is that $X$ always contains domain-specific features which are not defined for category classification, as employed by mDG+GCD tasks. However, these features may be defined for other possible $Y$, such as different styles. 
Thus, 
$\mathcal{L}_{mDG+GCD(X,Y)}\!\mid\! m_{(g(X), Y, D)}$ is a formal partial logic.

\subsubsection{Partial logic in CIL tasks}
In contrast to the previously mentioned tasks, CIL requires a model to incrementally learn previously unknown classes across $Se$ times of sessions. Therefore, the set of all possible situations for CIL tasks is denoted as:
$M_{CIL(X,Y)} = \left \{ m_{(g(X),Y_k)}^i\right \}_{i=1}^{Se}$, 
where $m_{(g(X),Y_k)}^i$ represents the situation where the logic is constructed based on formulations with all known classes at the $i^{th}$ session. 

For each session $m_{(g(X),Y_k)}^i \in M_{CIL(X,Y)}$, 
the logic $\mathcal{L}_{CIL(X,Y)}\!\mid\! m_{(g(X),Y_k)}^i$ can be represented as
\begin{equation}
\begin{split}
    &\left \{\mathcal{L}_{S(X,Y)},  \mathcal{L}_{P(X,Y)} \!\setminus \mathcal{L}_{S(X,Y)}
    \right \} \!\mid\!  m_{(g(X),Y_k)}^i, \\
    & \text{where } \mathcal{L}_{S(X,Y)}  \!\mid\! m_{(g(X),Y_k)}^i \overset{def}{=} \\
    & \;\;\;\;\;\;\;\;\;\;\;\;\;\;  
    \left \{\mathcal{L}_{(g(X),Y_k)}^j \!\mid\! m_{(g(X),Y_k)}^j \right \}_{j=1}^{i}, \\
    & \;\;\;\;\;\;\;\;\;\;  
    \mathcal{L}_{P(X,Y)} \!\setminus \mathcal{L}_{S(X,Y)} \!\mid\!  m_{(g(X),Y_k)}^i \overset{def}{=}  \\ 
    & \;\;\;\;\;\;\;\;\;\;\;\;\;\; 
    \left \{\mathcal{L}_{(g(X),Y_k)}^j \!\mid\! m_{(g(X),Y_k)}^j \right \}_{j=i+1}^{Se}.
\end{split}
\end{equation}
When $i<Se$, it is obvious that $\left \{\mathcal{L}_{(g(X),Y_k)}^j \!\mid\! m_{(g(X),Y_k)}^j \right \}_{j=i+1}^{Se} \neq \emptyset$ and  $\mathcal{L}_{CIL(X,Y)}\!\mid\! m_{(g(X),Y_k)}^i$ is a formal partial logic. 
When $i=Se$, $\mathcal{L}_{CIL(X,Y)}\!\mid\! m_{(g(X),Y_k)}^i$ is a formal partial logic for most cases when  $X$ contains noises.
Moreover, the number of sessions denoted as $Se^*$, is typically finite in current CIL benchmarks, with $Se^* < < Se$, where $Se$  represents the total number of image classes across all possible images, reflecting a more realistic scenario. Therefore, $\mathcal{L}_{CIL(X,Y)}$ is a formal partial logic for most cases.

\begin{figure*}[!t]
\centering
\includegraphics[width=0.8\linewidth]{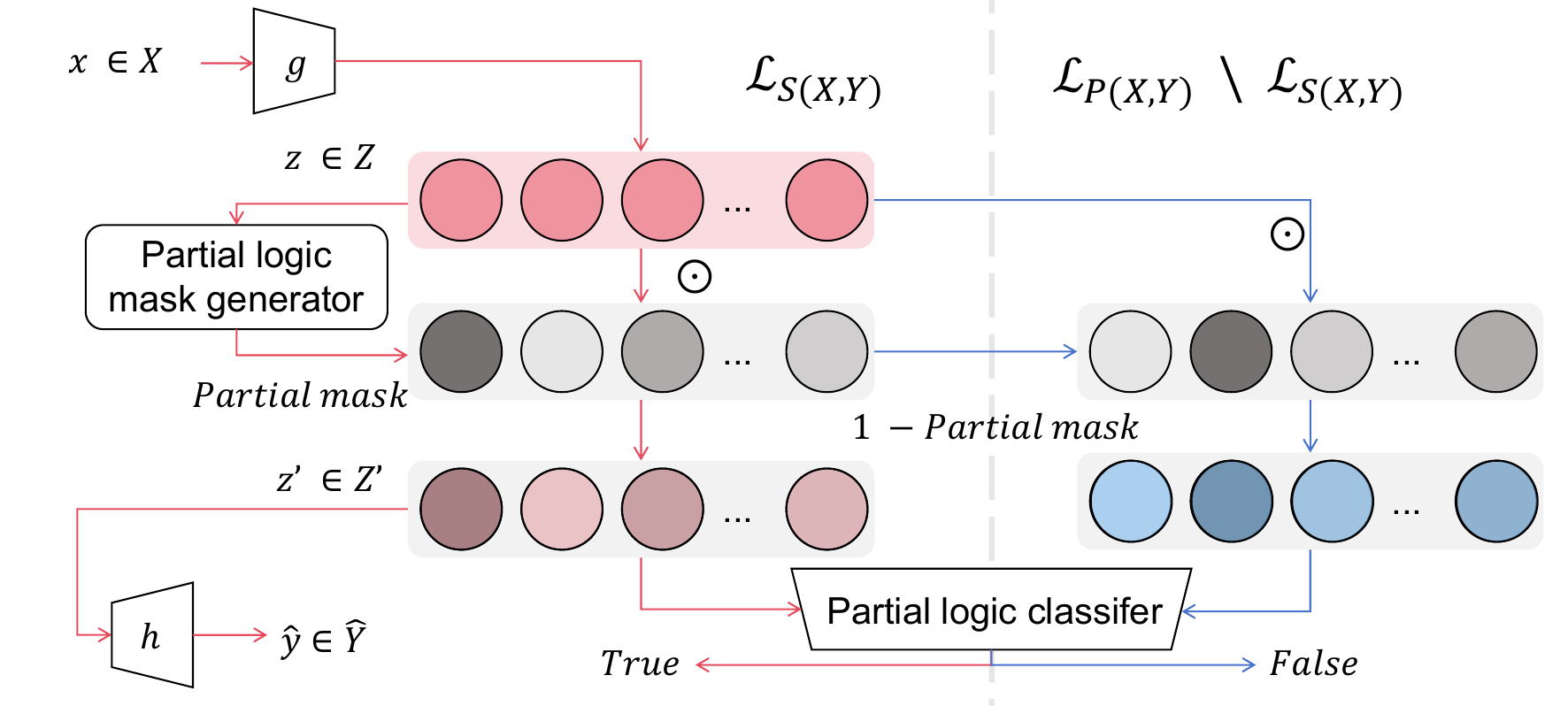}
\caption{Diagrams of applying PL-Reg. The diagram visualizes one example for simplification.
}\label{fig:app}
\end{figure*}

\subsubsection{Connections to improving generalization}
We now present that a model trained under the scope of partial logic improves the generalization of the models without considering partial logic for the aforementioned tasks.

For the visual classification task, we consider the feature extractor $g$ and the classifier $h$ as the whole model $f \overset{def}{=} mng(h\circ g)$. 
Though the previous tasks are formulated in different settings, improving the generalization of a model can be formed as minimizing the \textit{generalization loss}:
\begin{definition}[Generalization loss] 
    Let the target model $f^* \overset{def}{=} mng (h^*\circ g^*): f^*(X, Y): mng(X \to Y)$, can generalize across both seen and unseen sets $X, Y$. Denote that its trainable form is ${f}$, which is only trained on the seen sets. 
    The generalization loss on the $ \neg (X_k, Y_k) = (X, Y) \setminus (X_k, Y_k) $ is defined as:
    \begin{equation}
    \begin{split}
        &GL({f}, f^*, \neg (X_k, Y_k)  ) = \\
        &\;\;\;\;\mathbb{E}_{(x, y)\in \neg (X_k, Y_k) } || 
        f(x, y) - f^*(x, y)||_2.
    \end{split}
    \end{equation}
\end{definition}

We denote the model trained following \cref{def:sentential_logic} as $f_S \overset{def}{=} mng( h_S\circ g_S )$, and the other model training following \cref{def:partial_logic} as $f_P\overset{def}{=} (h_P \circ g_P)$. We present the following proposition:
\begin{proposition}
\label{prop:partial_better_generaliztion}
    Assume both models, $f_S$ and $f_P$ are well trained on $(X_k, Y_k)$ so that $||f_S(X_k, Y_k) - f^*||_2 = 0 $ and $ ||f_P(X_k, Y_k) - f^*||_2 = 0 $. 
    Under this assumption,
    $f_S = h_S\circ g_S$ gains more generalization loss than $f_P= h_P \circ g_P$:
    \begin{equation}
    \nonumber
    GL({f_S}, f^*, \neg (X_k, Y_k)  ) \geq 
    GL({f_P}, f^*, \neg (X_k, Y_k)),
    \end{equation}
    where equality is achieved iff the logic formed on $(X_k, Y_k)$ is not a formal partial logic.
\end{proposition}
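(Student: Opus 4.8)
The plan is to decompose the generalization domain $\neg(X_k, Y_k)$ according to the partial-logic structure introduced in \cref{def:partial_logic}, separating the region governed by defined (sentential) formulas from the region $\mathcal{L}_{P(X,Y)} \setminus \mathcal{L}_{S(X,Y)}$ of currently undefined formulas. Since both $f_S$ and $f_P$ are well trained on $(X_k, Y_k)$ with zero loss against $f^*$, the first step is to argue that on the subregion of $\neg(X_k, Y_k)$ whose meaning is already fixed by $\mathcal{L}_{S(X,Y)} \mid m_{(g(X), Y_k)} = \mathcal{L}_{(g(X),Y_k)}$ --- namely the unseen samples of the known classes --- both models inherit the same defined meaning and therefore contribute identically to the generalization loss. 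Formally, I would write $\mathbb{E}_{(x,y) \in \neg(X_k, Y_k)}[\cdot]$ as a sum of expectations over these two subregions and show the sentential-region terms cancel in the comparison $GL(f_S,f^*,\neg(X_k,Y_k)) - GL(f_P,f^*,\neg(X_k,Y_k))$.

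The second step isolates the undefined region, where the difference between the two models is entirely due to how each treats formulas in $\mathcal{L}_{P(X,Y)} \setminus \mathcal{L}_{S(X,Y)}$. By \cref{def:sentential_logic}, $f_S$ admits only truth values in $\{0,1\}$, so it is compelled either to force a definite meaning onto each undefined formula or to erase it; in both cases the meaning function $mng_S$ returns a committed value that, because $f_S$'s capacity is exhausted on the known classes (the semantic-overlap phenomenon of \cref{fig:banner}), deviates from the target meaning $mng(h^* \circ g^*)$, contributing a strictly positive term $\|f_S(x,y) - f^*(x,y)\|_2 > 0$ on this region. By contrast, $f_P$ may assign the value $2$ (``undefined'') through $mng_P$, reserving this capacity rather than over-committing it; I would argue that this reserved assignment is exactly the one that $f^*$ realizes as an undefined-until-defined formula, so $\|f_P(x,y) - f^*(x,y)\|_2$ is no larger than, and generically smaller than, the corresponding term for $f_S$.

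Combining the two steps gives $GL(f_S) - GL(f_P) = \mathbb{E}[\,\|f_S - f^*\|_2 - \|f_P - f^*\|_2\,]$ taken over the undefined region, which is non-negative by the second step, yielding the inequality. For the equality condition I would invoke the definition of a formal partial logic: equality forces the undefined-region expectation to vanish, which happens precisely when $\mathcal{L}_{P(X,Y)} \setminus \mathcal{L}_{S(X,Y)} \mid m_{(g(X), Y_k)} = \emptyset$, i.e., when the logic on $(X_k, Y_k)$ is not a formal partial logic; conversely, whenever it is a formal partial logic this region is non-empty and contributes a strictly positive gap.

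The main obstacle I anticipate is rigorously justifying that $f_P$'s reserved (undefined) assignment genuinely aligns with $f^*$ on the unknown region even though $f_P$ is trained only on $(X_k, Y_k)$. This requires tying the structural freedom granted by the value $2$ in $mng_P$ to the generalizing behavior of $f^*$, and showing that $f_S$'s forced binary commitment necessarily induces the semantic overlap that prevents it from matching $f^*$ there --- essentially translating the informal ``no room left'' argument of Scenario 1 into a quantitative lower bound on $\|f_S - f^*\|_2$ over the undefined formulas.
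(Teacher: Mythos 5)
Your proposal is correct in its core mechanism and reaches the same conclusion, but it organizes the argument differently from the paper. The paper's proof splits $\neg(X_k,Y_k)$ into the three data subsets $(X_k,Y_u)$, $(X_u,Y_k)$, $(X_u,Y_u)$ and compares the truth-value codomains of $f^*$, $f_S$, $f_P$ on each: on $(X_k,Y_u)$ the single-label assumption forces $f^*\in\{0\}$ and both well-trained models match it exactly, so this subset contributes \emph{nothing} to the gap; on $(X_u,Y_k)$ and $(X_u,Y_u)$ the containment of $\{0,1\}$ in $\{0,1,2\}$ plus the fact that $f^*$ may take the value $2$ yields the inequality, with equality exactly when those subsets are effectively empty, i.e.\ when the logic is not a formal partial logic. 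Your two-way split by logic structure (defined versus undefined formulas) captures the same mechanism and buys a cleaner statement of where the gap originates, but it is coarser in one respect: formulas pairing known inputs with unknown labels live in $\mathcal{L}_{P(X,Y)}\setminus\mathcal{L}_{S(X,Y)}$ under your partition, yet the paper shows their contribution is exactly zero (both models are pinned to $0$ there by the single-label constraint), so your claim that the undefined region contributes a ``generically strictly positive'' term overstates matters on that piece; the paper's finer split also lets it annotate which tasks (GCD, mDG+GCD, CIL) exercise which subset. The obstacle you flag at the end --- rigorously showing that $f_P$'s reserved value-$2$ assignments actually align with $f^*$ despite training only on $(X_k,Y_k)$ --- is real, but the paper's own proof does not resolve it either; it asserts the norm comparison directly from the codomain containment, so your proposal is at essentially the same level of rigor on this point.
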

\begin{proof}
    The proof is straightforward. For comprehensiveness and alignment with various tasks, $\neg (X_k, Y_k) $ is split into three possible subsets $\neg (X_k, Y_k) = \left \{  
    (X_k, Y_u), (X_u, Y_k),  (X_u, Y_u) 
    \right\}$, and the proof provides a discussion for each possible subset.

    \textbf{For $(X_k, Y_u)$.} For single-label classification, $X_k$ cannot be assigned to any unknown classes $Y_u$. Therefore, $f^*(X_u, Y_k) \in \{0\}$.
    As the well-trained $f_S$ and $f_P$ are assumed where $||f_S(X_k, Y_k) - f^*||_2 = 0 $ and $ ||f_P(X_k, Y_k) - f^*||_2 = 0 $, $||f_S(X_k, Y_u) - f^*||_2 = 0 $ and $ ||f_P(X_k, Y_u) - f^*||_2 = 0 $. It has:
    \begin{equation}
            \nonumber
             GL({f_S}, f^*, (X_k, Y_u)  ) = GL({f_P}, f^*, (X_k, Y_u)).
    \end{equation}
    This case requires no generalization and it can be aligned to standard single-label classification tasks. 
    
    \textbf{For $(X_u, Y_k)$.} In this case, it is possible that $X_u$ may or may not belong to the known classes. Therefore, $f^*(X_u, Y_k) \in \{0,1\}$. 
    Consequently, $||f_S(X_u, Y_k) \in \{0,1\} - f^*(X_u, Y_k)||_2 \geq ||f_P(X_u, Y_k) \in \{0,1, 2\}$. The equality can only be achieved when all $X_u$'s belong to the known classes. Hence:
    \begin{equation}
            \nonumber
            GL({f_S}, f^*, (X_u, Y_k)  ) \geq GL({f_P}, f^*, (X_u, Y_k)).
    \end{equation}
     Tasks like GCD and mGC+GCD both require such generalization, especially for their Situation~1. 

    \textbf{For $(X_u, Y_u)$.} In this case, $Y_u$ is unknown or undefined for the current situation and $X_u$ may belong to $Y_k$ but not $Y_u$ or belong to purely undefined $Y_u$.
    Thus, $f^*(X_u, Y_u) \in \{0, 2 \}$. $f_S$, however only allows defined relationships, thus  $f^*(X_u, Y_u) \in \{0\}$, while $f^*(X_u, Y_u) \in \{0, 2\}$.
    It is obvious that:
     \begin{equation}
            \nonumber
            GL({f_S}, f^*, (X_u, Y_u)  ) \geq GL({f_P}, f^*, (X_u, Y_u)),
    \end{equation}
    where the equality can only be achieved when every  $ (X_u, Y_u)  $ forms an empty set.
    Situation 1 of mGC+GCD and CIL both have $(X_u, Y_u)$ so that they require this kind of generalization. 
\end{proof}

\subsection{Deriving partial logic regularization}

While we have demonstrate the benefits of applying partial logic to visual tasks, it can be implemented in different forms. In the below, we introduce one intuitive yet practical  way of applying partial logic for validating \cref{prop:partial_better_generaliztion}.

As presented in \cref{def:partial_logic}, partial logic consists of sentential logic where some parts are defined, while others remain undefined. To enforce 
$ \mathcal{L}_{S(X,Y)} $ and $\mathcal{L}_{P(X,Y)} \!\setminus \mathcal{L}_{S(X,Y)} $, 
within the partial logic framework, we introduce a module to generate the partial logic mask $M$ for $Z$, and both of them have a shape of $[B, dim]$ where $B$ and $dim$ denote the batch size and the dimension of the extracted embeddings. 
In parallel, its negation $1-M$ is calculated for $\mathcal{L}_{P(X,Y)} \!\setminus \mathcal{L}_{S(X,Y)}$. 
Empirically, we use a linear layer with a sigmoid activation as the partial logic mask generator, aiming to avoid  additional parameters. 
With the generated mask, the extracted features are masked with $M$ and $1-M$ through multiplication: $Z\odot M, Z\odot(1_M)$, and they are concatenated and sent through the same liner classier $C$ to classify whether or not they have a defined meaning  with the following introduced loss:
\begin{equation}
\begin{split}
        {L}_{P1} = - \frac{1}{2B} \sum_{i=1}^{2B} &\left(  \right. Y_{P}^i \log(\hat{Y}_{P}^i) + \\
        & (1 - Y_{P}^i) \log(1 - \hat{Y}_{P}^i) \left. \right).
\end{split}
\end{equation}
Here $\hat{Y}_{P}^i = C(cat([Z\odot M, Z\odot(1_M)])) $, and $cat(\cdot)$ denotes concatenation at the first dimension;
$Y_{P}^i= cat([\textbf{1}, \textbf{0}])$, and $\textbf{1},\textbf{0}$ represents the one-hot label for the defined and undefined logic part, respectively. 

Additionally, to avoid the uniformly distributed mask values, which may cause the collapse of the logic construction, we further introduce the following loss:
\begin{equation}
    {L}_{P2} = - \frac{1}{B\cdot dim} \sum_{i=1}^{B} \sum_{j=1}^{dim} \left( m_{ij} \log(m_{ij}) \right),
\end{equation}
where $m=softmax(M)$ is the softmax output of $M$ at the last dimension; 
$m_{ij}$ is the $j^{th}$ element of the softmax output for the $i^{th}$ sample in the batch.
$\mathcal{L}_{P2}$ is applied to maintain the diversity of the mask for each dimension.

Finally, for the $\mathcal{L}_{S(X,Y)}$ part, the L-Reg is still requested to be applied for constructing the sentential logic:
\begin{equation}
\label{eq:loss}
\begin{split}
        {L}_{L-Reg} =  
        - \frac{1}{M}\sum_{i=1}^{M}\left[\sum_{j=1}^{K}a_{j,i}\log a_{j,i}\right] + \\
          \sum_{j=1}^{K}\left[\frac{1}{M}\sum_{i=1}^{M}a_{j,i} \log(\frac{1}{M}\sum_{i=1}^{M}a_{j,i})\right]
        ,
\end{split}
\end{equation}
where $a=softmax(\hat{Y}^{T} Z)$ is the softmax output of $\hat{Y}^{T} Z$ at the last dimension; 
$a_{j,i}$ denotes the value at the $i,j$ position of $softmax(\hat{Y}^{T}Z)$.

The overall partial logic regularization, termed PL-Reg, is a weighted combination of the aforementioned losses:
\begin{equation}
\begin{split}
    &{L}_{PL-Reg}\\
        = &\omega_{p1} {L}_{P1} + \omega_{p2}  {L}_{P2} + \omega_{L-Reg} {L}_{L-Reg},
\end{split}
\end{equation}
where $\omega_{p1}, \omega_{p2}$ and $\omega_{L-Reg}$ denote the weights. Empirically, we follow the weights of L-Reg in \cite{tan2024interpret} unless specificed explictly. 
Together with the other existing methods' losses denoted by $L_{main}$, the final loss for the training is:
\begin{equation}
    {L}_{final} = {L}_{PL-Reg} + L_{main}.
\end{equation}

\section{Experiments}
To validate the proposed PL-Reg, in this section, we conduct extensive experiments on the aforementioned tasks, including GCD (\cref{Experiments on GCD}), mDG+GCD (\cref{Experiments on mDG+GCD}), and CIL (\cref{Experiments on incremental learning}).
\subsection{Experiments on GCD}
\label{Experiments on GCD}

\begin{table*}[ht]
    \centering
    \caption{Statistics of GCD datasets.}
    \label{tab:stats}
    \begin{tabular}{l|cccccc}
        \hline
                & CUB  & Standford Cars & Herbarium19 & CIFAR10 & CIFAR100 & ImageNet-100 \\
        \hline
        Known classes & 100  & 98             & 341         & 5       & 80       & 50           \\
        Seen data & 1.5K & 2.0K           & 8.9K        & 12.5K   & 20K      & 31.9K        \\ \hline
        ALL classes & 200  & 196            & 683         & 10      & 100      & 100          \\
        Unseen data & 4.5K & 6.1K           & 25.4K       & 37.5K   & 30K      & 95.3K        \\
        \hline
    \end{tabular}
\end{table*}

\subsubsection{Experimental settings}
We follow the experimental details introduced in  L-Reg~\cite{tan2024interpret} for GCD tasks. 
Similarly, we validate our approach through training PIM additionally with PL-Reg.

\textbf{Competitors.}
Our main competitor is L-Reg~\cite{tan2024interpret} applied to PIM~\cite{chiaroni2023parametric} (PIM+L-Reg). In line with~\cite{tan2024interpret}, we compare our proposed method against several existing generalized category discovery methods, including GCD~\cite{vaze2022generalized} and PIM~\cite{chiaroni2023parametric}. Additionally, we consider traditional machine learning methods, such as k-means~\cite{macqueen1967classification}, as well as three novel category discovery approaches: RankStats+\cite{han2021autonovel}, UNO+\cite{fini2021unified}, and ORCA~\cite{cao2022openworld}. 
Several information maximization methods, including RIM~\cite{krause2010discriminative} and TIM~\cite{boudiaf2020information}, are also adapted for generalized category discovery and serve as competitors.

\textbf{Datasets.}
Consistent with previous works, we use six image datasets to evaluate the feasibility of our proposed PL-Reg in comparison to the other competitors. These datasets include three generic object recognition datasets: CIFAR-10~\cite{krizhevsky2009learning}, CIFAR-100~\cite{krizhevsky2009learning}, and ImageNet-100~\cite{deng2009imagenet}; two fine-grained datasets: CUB~\cite{wah2011caltech} and Stanford Cars~\cite{krause20133d}; and a long-tail dataset, Herbarium19~\cite{tan2019herbarium}. 

\textbf{Task protocols.} {Following the protocols of the aforementioned previous work, we divide the initial training set of each dataset into labeled and unlabeled subsets. Samples from half of the classes are designated as unlabeled, and their labels are not used for training. Specifically, half of the image samples from known classes are assigned to the labeled subset, while the remaining half are allocated to the unlabeled subset. Furthermore, the unlabeled subset includes all image samples from novel classes in the original dataset. Consequently, the unlabeled subset consists of instances from 
$K$ different classes. Detailed statistics of the datasets are provided in \cref{tab:stats}.
}

\textbf{Evaluation metric.}
{Following prior works \cite{vaze2022generalized,chiaroni2023parametric,tan2024interpret}, we adopt the  accuracy metric of all classes, known classes, and unknown classes for evaluation. 
}



\textbf{Training details.} 
Similar to PIM and PIM+L-Reg, we exploit latent features extracted by the feature encoder DINO (VIT-B/16)~\cite{caron2021emerging}, which is pre-trained on ImageNet~\cite{deng2009imagenet} through self-supervised learning.
Similar to what is implemented in L-Reg~\cite{tan2024interpret}, the implementation of PL-Reg is also applied to the latent features of the PIM backbone, where the partial logic masks and classifications are conducted. These masked latent features are subsequently utilized for the application of L-Reg.
The losses proposed in PIM are treated as 
$L_{main}$. The values of hyper-parameters used for training are presented in \cref{tab:GCD_hyper}. All other omitted hyper-parameters follow the setting of \cite{tan2024interpret}.

\begin{table}[t]
\caption{GCD experimental settings: Hyper-parameters. We follow the weights of L-Reg in \cite{tan2024interpret} as $\omega_{L-Reg}$, where $\lambda$ is the search weight of InfoMax losses presented in PIM~\cite{chiaroni2023parametric}.}\label{tab:GCD_hyper}
\centering
\begin{tabular*}{\linewidth}{@{\extracolsep\fill}lccc}
\hline
 & $\omega_{p1}$ & $\omega_{p2}$ &$\omega_{L-Reg}$ \\ \hline
CUB &  1. & 5e-1  & 1e-1\\
Stanford Cars & 5. & 5e-1 & 1e-3$\cdot\lambda$\\
Herbarium19 & 1.5e2 & 1e2 & 2e-1$\cdot\lambda$  \\
CIFAR100 &  1. & 5e-1 & 2.5e-4$\cdot\lambda$ \\
CIFAR10 & 1e3 & 5.& 1e-2$\cdot\lambda$\\
Imagenet-100 & 1. & 5e-1 & 1e-2$\cdot\lambda$\\
 \hline
\end{tabular*}%
\end{table}


\begin{table}[t]
\caption{\textbf{GCD results}: Averaged accuracy scores for all, known and unknown classes across all six datasets. The best results are highlighted in \blue{blue}.}\label{tab:GCD_summary}
\begin{tabular*}{\linewidth}{lccc}
\hline
 & All & Known & Unknown \\ \hline
k-means & 44.7 & 46.0 & 43.9 \\
RankStats+ (TPAMI-21) & 38.6 & 54.6 & 25.6 \\
UNO+ (ICCV-21) & 51.2 & 74.5 & 36.7 \\
ORCA (ICLR-22) & 46.3 & 51.3 & 41.2 \\
ORCA - ViTB16 & 56.7 & 65.6 & 49.9 \\
GCD (CVPR-22) & 60.4 & 71.8 & 52.9 \\
RIM (NeurIPS-10) & 62.0 & 72.5 & 55.4 \\
TIM (NeurIPS-20) & 62.7 & 72.6 & 56.4 \\ \hline
PIM (ICCV-23) & 67.4 & \blue{79.3} & 59.9 \\
\rowcolor{mygray}\; + L-Reg (NeurIPS-24) & 68.8 & 79.0 & 62.7 \\
\rowcolor{mygray}\; \textbf{+ PL-Reg} & \blue{\textbf{69.3}} & {78.8} & \blue{\textbf{63.7}} \\ 
\hline
\end{tabular*}%
\end{table}

\begin{table*}[t]
\caption{\textbf{GCD results}: Detailed accuracy scores for all, known and unknown classes across all six datasets. The best results are highlighted in \blue{blue}.}\label{tab:GCD_all}
\resizebox{\textwidth}{!}{%
\begin{tabular}{l|ccc|ccc|ccc}
\hline
\multicolumn{1}{c|}{} & \multicolumn{3}{c|}{\textbf{CUB}} & \multicolumn{3}{c|}{\textbf{Stanford Cars}} & \multicolumn{3}{c}{\textbf{Herbarium19}} \\
\multicolumn{1}{c|}{} & {All} & {Known} & {Unknown} & {All} & {Known} & {Unknown} & {All} & {Known} & {Unknown} \\ \hline
k-means & 34.3 & 38.9 & 32.1 & 12.8 & 10.6 & 13.8 & 12.9 & 12.9 & 12.8 \\
RankStats+ (TPAMI-21) & 33.3 & 51.6 & 24.2 & 28.3 & 61.8 & 12.1 & 27.9 & 55.8 & 12.8 \\
UNO+ (ICCV-21) & 35.1 & 49.0 & 28.1 & 35.5 & \blue{70.5} & 18.6 & 28.3 & 53.7 & 14.7 \\
ORCA (ICLR-22) & 27.5 & 20.1 & 31.1 & 15.9 & 17.1 & 15.3 & 22.9 & 25.9 & 21.3 \\
ORCA - ViTB16 & 38.0 & 45.6 & 31.8 & 33.8 & 52.5 & 25.1 & 25.0 & 30.6 & 19.8 \\
GCD (CVPR-22) & 51.3 & 56.6 & 48.7 & 39.0 & 57.6 & 29.9 & 35.4 & 51.0 & 27.0 \\
RIM (NeurIPS-10) & 52.3 & 51.8 & 52.5 & 38.9 & 57.3 & 30.1 & 40.1 & \blue{57.6} & 30.7 \\
TIM (NeurIPS-20) & 53.4 & 51.8 & 54.2 & 39.3 & 56.8 & 30.8 & 40.1 & 57.4 & 30.7 \\ \hline
PIM (ICCV-23) & 62.7 & 75.7 & 56.2 & 43.1 & 66.9 & 31.6 & 42.3 & 56.1 & 34.8 \\
\rowcolor{mygray}\; + L-Reg & 65.3 & 76.0 & 60.0 & 44.8 & 66.0 & 34.6 & \blue{43.7} & 55.8 & 37.2 \\
\rowcolor{mygray}\; \textbf{+ PL-Reg} & \blue{\textbf{67.1}} & \blue{\textbf{76.9}} & \blue{\textbf{62.3}} & \blue{\textbf{45.0}} & \textbf{65.6} & \blue{\textbf{35.1}} & \textbf{43.6} & \textbf{54.2} & \blue{\textbf{37.9}} \\
\hline
\multicolumn{1}{c|}{} & \multicolumn{3}{c|}{\textbf{CIFAR10}} & \multicolumn{3}{c|}{\textbf{CIFAR100}} & \multicolumn{3}{c}{\textbf{ImageNet-100}} \\
\multicolumn{1}{c|}{} & {All} & {Known} & {Unknown} & {All} & {Known} & {Unknown} & {All} & {Known} & {Unknown} \\ \hline
k-means & 83.6 & 85.7 & 82.5 & 52.0 & 52.2 & 50.8 & 72.7 & 75.5 & 71.3 \\
RankStats+ (TPAMI-21) & 46.8 & 19.2 & 60.5 & 58.2 & 77.6 & 19.3 & 37.1 & 61.6 & 24.8 \\
UNO+ (ICCV-21) & 68.6 & \blue{98.3} & 53.8 & 69.5 & 80.6 & 47.2 & 70.3 & 95.0 & 57.9 \\
ORCA (ICLR-22) & 88.9 & 88.2 & 89.2 & 55.1 & 65.5 & 34.4 & 67.6 & 90.9 & 56.0 \\
ORCA - ViTB16 & \blue{97.1} & 96.2 & 97.6 & 69.6 & 76.4 & 56.1 & 76.5 & 92.2 & 68.9 \\
GCD (CVPR-22) & 91.5 & 97.9 & 88.2 & 70.8 & 77.6 & 57.0 & 74.1 & 89.8 & 66.3 \\
RIM (NeurIPS-10) & 92.4 & 98.1 & 89.5 & 73.8 & 78.9 & 63.4 & 74.4 & 91.2 & 66.0 \\
TIM (NeurIPS-20) & 93.1 & 98.0 & 90.6 & 73.4 & 78.3 & 63.4 & 76.7 & 93.1 & 68.4 \\ \hline
PIM (ICCV-23) & 94.7 & 97.4 & 93.3 & 78.3 & 84.2 & 66.5 & 83.1 & \blue{95.3} & 77.0 \\
\rowcolor{mygray}\; + L-Reg & 94.8 & {97.6} & 93.4 & 80.8 & 84.6 & 73.2 & 83.4 & 94.0 & 78.0 \\
\rowcolor{mygray}\; \textbf{+ PL-Reg} & {\textbf{94.9}} & \textbf{97.5} & \blue{\textbf{93.6}} & \blue{\textbf{81.5}} & \blue{\textbf{84.9}} & \blue{\textbf{74.9}} & \blue{\textbf{83.5}} & \textbf{93.7} & \blue{\textbf{78.3}} \\
\hline
\end{tabular}%
}
\end{table*}

\subsubsection{Overall results}
The averaged results across all datasets are presented in \cref{tab:GCD_summary}.
Compared with L-Reg, using PL-Reg results in averaged $0.5\%$ improvements for all classes.
PIM+PL-Reg achieves the highest overall accuracy at $69.3\%$, making it the top performer in this comparison. The addition of PL-Reg provides a boost across all datasets, especially when handling unknown classes.

The accuracy for known classes is slightly lower at $78.8\%$, compared to PIM and PIM+L-Reg ($79.3\%$ and $79.0\%$, respectively). 
This indicates a small trade-off between known class performance and handling the unknown classes.
Those compromises are due to the fact that PL-Reg inherits limitations from L-Reg. Please refer to \cref{Discussion} for more discussions. 

The most significant improvement is seen in the unknown class accuracy, where PIM+PL-Reg achieves $63.7\%$, outperforming PIM ($59.9\%$) and PIM+L-Reg ($62.7\%$). This suggests that PL-Reg provides a substantial benefit for discovering and classifying novel categories, making PIM+PL-Reg the most well-rounded model in terms of accuracy across all classes. The consistent improvements of the unknown classes validate our theory in \cref{prop:partial_better_generaliztion}.

\subsubsection{Results across datasets}
\label{sec:GCD_detialed_analysis}
We conduct detailed analysis of each dataset in this section.

\textbf{Results of generic datasets: CIFAR10, CIFAR100, and Imagenet-100.}
As shown in \cref{tab:GCD_all}, the performance of the methods on generic datasets like CIFAR10, CIFAR100, and ImageNet-100 indicates that PIM with PL-Reg generalizes across common object recognition tasks. 
PIM+PL-Reg consistently outperforms other methods across all datasets, demonstrating significant improvements over PIM and PIM+L-Reg. Specifically, on CIFAR100, PIM+PL-Reg achieves a notable increase in the all classes accuracy ($81.5\%$) compared to PIM+L-Reg ($80.8\%$), as well as improvements in the unknown class category ($74.9\%$ compared to $73.2\%$). The CIFAR10 and ImageNet-100 results reveal a similar pattern, where PIM+PL-Reg outperforms PIM+L-Reg, particularly showing improvements in the unknown classes. 
The consistency of these improvements, suggests that PL-Reg enhances the model's ability to generalize across different classes, particularly the unknown classes.

\textbf{Results of fine-grained datasets: CUB and Stanford Cars.}
The fine-grained datasets, CUB and Stanford Cars, exhibit more significant differences between the methods as shown in \cref{tab:GCD_all}, especially in the case of PIM+PL-Reg. On CUB, PIM+PL-Reg achieves the highest overall accuracy ($67.1\%$), which is a substantial improvement over PIM ($62.7\%$) and PIM+L-Reg ($65.3\%$). For Stanford Cars, PIM+PL-Reg also outperforms the baseline methods, particularly in the unknown class accuracy ($35.1\%$ compared to $31.6\%$ in PIM). 
The improvement in the unknown class performance is more pronounced in Stanford Cars in comparison to PIM+L-Reg ($35.1\%$ for PIM+PL-Reg vs. $34.6\%$ for PIM+L-Reg). This suggests that PL-Reg offers better generalization when it comes to fine-grained category discovery compared to L-Reg, especially for unknown classes that are more challenging to identify due to their subtle differences.

\textbf{Results of long-tailed dataset: Herbarium19.}
The results in \cref{tab:GCD_all} on Herbarium19, a long-tailed dataset, also validate the advantages of PL-Reg in handling unknown classes. While the compromise in the known classes seems pronounced in this case (see discussions in \cref{Discussion}),  
PIM+PL-Reg shows notable improvements over PIM ($34.8\%$) for the unknown classes and achieves the second-best all classes results. 
The consistent improvements for unknown classes underline the importance of PL-Reg for addressing the challenges posed by long-tailed distribution in datasets. 

\subsection{Experiments on mDG+GCD}
\label{Experiments on mDG+GCD}


\subsubsection{Experimental settings}
We conduct mDG + GCD experiments on the same datasets  used in \cite{tan2024interpret}.

\textbf{Competitors.}
In our experiment, the GMDG serves as the baseline, and our main competitor is L-Reg~\cite{tan2024interpret} applied to GMDG~\cite{tan2024rethinking}. 
Results of the other methods (including ERM~\cite{gulrajani2020search}, PIM~\cite{chiaroni2023parametric}, and MIRO~\cite{cha2022miro}) are also adopted for comparison.

\textbf{Datasets.} 
{We leverage the datasets utilized in mDG tasks to construct the mDG+GCD datasets:
PACS~\cite{li2017deeper}, VLCS~\cite{fang2013unbiased}, OfficeHome~\cite{venkateswara2017deep}, TerraIncognita~\cite{beery2018recognition}, and DomainNet~\cite{peng2019moment}. Each dataset contains several domains. For example, \cref{fig:domain_vis} illustrates a few samples of all domains in the PACS dataset.  
}

\textbf{Task protocols.}
During training, only samples from seen domains are accessible, with half of the classes masked as unknown, where those samples of unknown classes are adopted as the unlabeled data. 
For instance, in the PCAS dataset comprising $7$ classes, classes labeled within the range $[0, 1, 2, 3]$ are retained, while classes in $[4, 5, 6]$ are masked.
It is noteworthy that data categorized as unknown classes in our setup are from unknown classes and used as unlabelled. 
However, we acknowledge that this prior is not explicitly known. To align with the GCD setting, we follow \cite{tan2024interpret} and operate under the assumption that the unlabeled set may potentially include samples from known classes.
As a result, we avoid constraining the model to classify unlabeled data solely as unknown classes, introducing a more challenging and realistic generalization scenario.



\textbf{Evaluation metric.}
Following the mDG approach, we also adopt leave-one-out cross-validation, where each domain in the dataset is tested as the unseen domain. 
We use the same metric from the GCD task for the mDG+GCD task. Similarly, the metrics include the accuracy for all, known, and unknown classes.
All evaluation details of PL-Reg are the same as those for validating L-Reg.  
Specifically, the model used for testing is selected during the training, where it achieves the best performance for known classes in the seen domains.

\textbf{Training details.}
For all experiments, partial logic masks and classifications are applied to the latent features of the GMDG backbone, which also serves as the basis for implementing L-Reg~\cite{tan2024interpret}. These masked latent features are subsequently utilized for the application of L-Reg.
The models are trained with the aforementioned labeled and unlabeled sets from the seen domains and tested on the samples from the unseen domain, with training settings consistent with L-Reg~\cite{tan2024interpret}.
The losses proposed in GMDG are treated as 
$L_{main}$. The values of hyper-parameters used for training are shown in \cref{tab:mDG+GCD_hyper}. All other omitted hyper-parameters are kept the same as those in \cite{tan2024interpret}.

\begin{figure}[t]
\centering
\includegraphics[width=\linewidth]{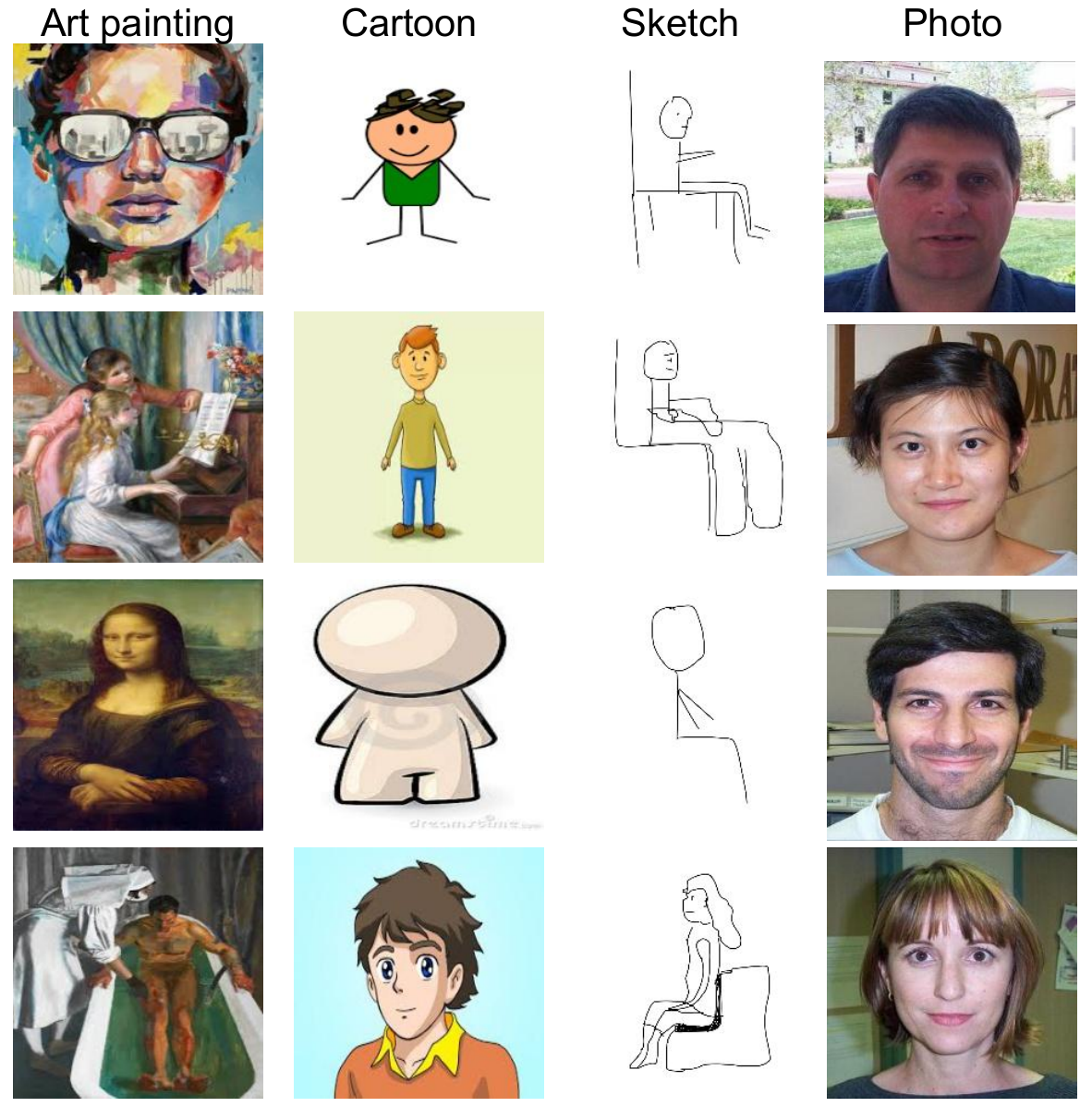}
\caption{
MDG+GCD visualization: Samples from all four domains in PACS datasets.
}\label{fig:domain_vis}
\end{figure}

\begin{table}[t]
\caption{MDG+GCD experimental settings: Hyper-parameters. The lr-mult hyper-parameter is the multiplication factor of the learning rate that was introduced previously~\cite{gulrajani2020search}.} \label{tab:mDG+GCD_hyper}
\centering
\begin{tabular*}{\linewidth}{@{\extracolsep\fill}lcccc}
\hline
 & $\omega_{p1}$ & $\omega_{p2}$ &$\omega_{L-Reg}$ & lr-mult\\ \hline
PACS &  5e-2 &  5e-2 &  1e-1 & 5e-1\\ 
HomeOffice  & 1e-1 & 5e-2 &  1e-1 & 1.\\
VLCS  &  1e2  & 5e-2 & 1e-1 & 5.\\ 
TerraIncognita & 7.5 & 5e-2 & 1e-1 & 4.5\\ 
DomainNet  & 1e-1 & 5e-2 &  1e-1  & 1. \\ 
 \hline
\end{tabular*}%
\end{table}


\begin{table*}[t]
\caption{MDG+GCD results: accuracy scores of each dataset. Improvements are highlighted in \red{red}.
The best results of each domain for each category are highlighted in \blue{blue}; Imp. denotes marginal improvements gained from L-Reg where improvements are highlighted in \red{red}.
}
\label{tab:avg_mdg_gcd_more_results}
\resizebox{\linewidth}{!}{%
\begin{tabular}{@{}ll|ccc|ccc|ccc|ccc|ccc@{}}
\hline
             &                      & \multicolumn{3}{c|}{\textbf{PACS}}                                                         & \multicolumn{3}{c|}{\textbf{HomeOffice}}                                                   & \multicolumn{3}{c|}{\textbf{VLCS}}                                                         & \multicolumn{3}{c|}{\textbf{TerraIncognita}}                                                & \multicolumn{3}{c}{\textbf{DomainNet}}                                                     \\ 
Method       & & {All}  & {Known}  & {Unknown} & {All}  & {Known}  & {Unknown} & {All}  & {Known}  & {Unknown} & {All}  & {Known}  & {Unknown}        & {All}  & {Known}  & {Unknown}         \\ \hline
ERM           &          & 57.26                        & 77.77                        & 22.33                        & 44.80                        & 74.67                        & 8.50                         & 61.51                        & 82.89                        & 34.88                        & 37.34                        & 20.46                        & 45.15                         & 22.56                        & 40.89                        & 6.85                         \\
PIM          &          & 56.35                        & 71.06                        & 27.43                        & 43.42                        & 72.44                        & 8.13                         & 63.19                        & 80.34                        & 40.24                        & 47.75                        & 35.31                        & 50.85                         & 24.03                        & 42.59                        & 7.86                         \\
MIRO         &             & 56.83                        & 85.62                        & 24.85                        & 48.28                        & 80.61                        & 9.03                         & 61.53                        & 82.72                        & 35.03                        & \blue{50.22}                        & \blue{39.92}                        & {49.45}                         & {31.49}                       & 55.44                        & 10.57                        \\\hline
GMDG         &   & 58.33                        & 91.46                        & 10.18                        & 48.85                        & \blue{81.41}                        & 9.22                         & 61.36                        & \blue{83.31}                        & 33.75                        &  40.02	& 32.38	 & 40.07 & {31.15} & 55.17                        & 10.18                        \\
\rowcolor{mygray}\; {+L-Reg}        &                      & 67.82                        & 91.86                        & 31.33                        & 51.96                        & 79.74                        & 18.15                        & 62.32                        & 82.77                        & 36.09                        &45.86& 39.77&	41.55                         & 31.75                        & 55.18                        & 11.30                        \\

\rowcolor{mygray} \; \textbf{+PL-Reg}& & \textbf{\blue{74.21}}& 	\textbf{\blue{92.07}}& 	\textbf{\blue{42.69}}& \textbf{\blue{54.73}}&	80.98&	\textbf{\blue{22.58}}& \textbf{\blue{65.10}}& 	80.13& 	\textbf{\blue{45.87}}& 48.93&	31.65&		\textbf{\blue{54.28}}& \textbf{\blue{32.54}}&	\textbf{\blue{55.53}}&	\textbf{\blue{12.37}}
\\ 
\rowcolor{mygray}\; Imp.& & \red{6.39} & \red{0.21}&	\red{11.36}& \red{2.78}&	\red{1.25}&	\red{4.44}& \red{2.78}& -2.64& 	\red{9.78}&  \red{3.08} & {-8.13} &	\red{12.73} & \red{0.79}&	\red{0.34}& \red{1.07}
\\
\hline
\end{tabular}%
}
\end{table*}
\begin{table}[t]
\caption{MDG+GCD results: Averaged accuracy scores for all, known and unknown classes across all five datasets. The best results are highlighted in \blue{blue}.
}
\label{tab:MDG+GCD_res}
\centering
\begin{tabular*}{\linewidth}{lccc}
\toprule
Method         & All                         & Known                         & Unknown                          \\ \hline
ERM             & 44.69                       & 59.33                       & 23.54                        \\
PIM (ICCV-23)            & 46.95                       & 60.35                       & 26.90                        \\
MIRO (ECCV-22)            & 49.67                       & 68.86                       & 25.79                        \\ \hline
GMDG (CVPR-24)          &47.94& 	68.75& 	20.68             \\
\rowcolor{mygray}\; {+L-Reg} (NeurIPS-24)  &51.94	&\blue{69.87}&	27.68                \\
\rowcolor{mygray}\; \textbf{+PL-Reg} &{\blue{55.10}}	&68.07	&{\blue{35.56}}  \\
\rowcolor{mygray}\; Imp. & \red{3.16} & -1.79 &	\red{7.87} \\ 
	
\bottomrule
\end{tabular*}%
\end{table}

\begin{table*}[!t]
\caption{MDG+GCD results: accuracy scores of each domain in PACS dataset. 
The best results of each domain for each category are highlighted in \blue{blue}; Imp. denotes marginal improvements gained from L-Reg where improvements are highlighted in \red{red}. $^*$ denotes the L-Reg reproduced by using the same $w_{L-Reg}$ that the PL-Reg used.
}
\label{tab:mdg_gcd_more_results1}
\resizebox{\linewidth}{!}{%
\begin{tabular}{l|ccc|ccc|ccc|ccc}
\hline
 & \multicolumn{3}{c|}{\textbf{art painting}} & \multicolumn{3}{c|}{\textbf{cartoon}}      & \multicolumn{3}{c|}{\textbf{photo}}        & \multicolumn{3}{c}{\textbf{sketch}}        \\ 
Method      & {All}  & {Known}  & {Unknown} & {All}  & {Known}  & {Unknown} & {All}  & {Known}  & {Unknown} & {All}  & {Known}  & {Unknown} \\  \hline
ERM & 47.77         & 90.00        & 0.00         & 56.08        & 83.49        & 20.47        & 59.13        & 47.35        & 68.85        & 66.06        & 90.23        & 0.00         \\
PIM & 46.80         & 55.17        & 37.32        & 50.37        & 89.15        & 0.00         & 62.05        & 49.50        & 72.40        & 66.19        & 90.40        & 0.00         \\
MIRO & 51.86         & 97.70        & 0.00         & 56.45        & \blue{99.91}        & 0.00         & 48.35        & 75.17        & 26.23        & 70.64        & 69.72        & {73.16}        \\
\rowcolor{mygray}\; \textbf{+ L-Reg}$^*$
&54.18& 98.28& 4.29&55.28& 86.89& 14.22&61.08& \blue{99.83}& 29.10&70.58 &96.05& 0.95 \\
\rowcolor{mygray}\; \textbf{+ PL-Reg} &54.97&98.16& 6.11&60.29& 97.74& 11.64&67.66& 99.67& 41.26&70.17& 68.85& \blue{73.75}\\
\hline
GMDG  & 51.92         & 97.82        & 0.00         & 54.80        & 96.98        & 0.00         & 56.14        & 74.83        & 40.71        & 70.45        & 96.22        & 0.00         \\
\rowcolor{mygray}\; + L-Reg & \blue{79.26}         & 98.05        & \blue{58.00}       & 68.18        & {99.25}        & 27.82        & 52.40        & 74.50        & 34.15        & \blue{71.47}       & 95.66        & {5.34}         \\
\rowcolor{mygray}\; \textbf{+ PL-Reg} &  {76.69}	&{\blue{98.28}}	&{52.28}	&{\blue{74.52}}	&{98.77}	&{\blue{43.01}}&	{\blue{74.78}}&	{{74.67}}&	{\blue{74.86}}	&{{70.87}}&	{\blue{96.57}} &{0.59}
\\
\rowcolor{mygray}\; Imp.& -2.56&	\red{0.23}&	-5.72	&\red{6.34}&	-0.47&	\red{15.20}&	\red{22.38}&	\red{0.17}&	\red{40.71}&	-0.60&	\red{0.91}& -4.75 \\
\hline
\end{tabular}%
}
\end{table*}

\begin{table*}[!t]
\caption{MDG+GCD results: accuracy scores of each domain in HomeOffice dataset.
The best results of each domain for each category are highlighted in \blue{blue}; Imp. denotes marginal improvements gained from L-Reg where improvements are highlighted in \red{red}.
}
\label{tab:mdg_gcd_more_results2}
\resizebox{\linewidth}{!}{%
\begin{tabular}{l|ccc|ccc|ccc|ccc}
\hline
    & \multicolumn{3}{c|}{\textbf{Art}}          & \multicolumn{3}{c|}{\textbf{Clipart}}      & \multicolumn{3}{c|}{\textbf{Product}}      & \multicolumn{3}{c}{\textbf{Real World}}    \\ 
Method      & {All}  & {Known}  & {Unknown} & {All}  & {Known}  & {Unknown} & {All}  & {Known}  & {Unknown}        & {All}  & {Known}  & {Unknown}         \\  \hline
ERM & 45.26        & 72.68        & 3.26         & 37.94        & 64.48        & 10.19        & 46.71        & 78.74        & 9.87         & 49.28        & 82.80        & 10.68        \\
PIM & 42.53        & 68.09        & 3.39         & 35.77        & 56.75        & 13.83        & 47.27        & 77.58        & 12.41        & 48.11        & 87.35        & 2.90         \\
MIRO & 50.57        & 79.57        & 6.13         & 39.55        & 67.23        & 10.60        & 51.35        & 86.16        & 11.32        & 51.66        & \blue{89.50}        & 8.09         \\
\hline
GMDG & 51.60        & \blue{81.96}        & 5.08         & 40.89        & \blue{69.30}        & 11.19        & 51.15        & \blue{87.53}        & 9.32         & 51.75        & 86.87        & 11.30        \\
\rowcolor{mygray}\; +L-Reg & 52.83        & 79.15        & 12.52        & 43.59        & 69.02        & 16.99        & 56.31        & 83.11        & 25.48        & 55.11        & 87.67        & 17.59        \\
\rowcolor{mygray}\; \textbf{+PL-Reg} & 	{\blue{54.79}} & {81.87}&		{\blue{13.30}} &	{\blue{44.22}}&	{68.57} &	{\blue{18.75}}&	{\blue{57.94}}&	{84.47}& {\blue{27.42}}&		{\blue{61.99}}&{89.01}	&{\blue{30.86}}\\
\rowcolor{mygray}\; Imp.& \red{1.96}	&\red{2.72}& \red{0.78}	&\red{0.63}&	-0.45&	\red{1.76}	&\red{1.63}&	\red{1.37}&	\red{1.94}&	\red{6.88}	&\red{1.34}&	\red{13.27}
\\
\hline
\end{tabular}%
}
\end{table*}
\begin{table*}[!t]
\caption{MDG+GCD results: accuracy scores of each domain in VLCS dataset.
The best results of each domain for each category are highlighted in \blue{blue}; Imp. denotes marginal improvements gained from L-Reg where improvements are highlighted in \red{red}.
}
\label{tab:mdg_gcd_more_results3}
\resizebox{\linewidth}{!}{%
\begin{tabular}{@{}l|ccc|ccc|ccc|ccc} 
\hline
 & \multicolumn{3}{c|}{\textbf{Caltech101}}   & \multicolumn{3}{c|}{\textbf{LabelMe}}      & \multicolumn{3}{c|}{\textbf{SUN09}}        & \multicolumn{3}{c}{\textbf{VOC2007}}       \\ 
Method & {All}  & {Known}  & {Unknown} & {All}  & {Known}  & {Unknown} & {All}  & {Known}  & {Unknown}        & {All}  & {Known}  & {Unknown}         \\  \hline
ERM & 82.07        & \blue{74.87}        & 85.85        & 50.54        & 92.01        & 4.85         & 62.07        & 95.15        & 11.38        & 51.35        & \blue{69.51}        & 37.45        \\
PIM & 80.39        & 72.05        & 84.77        & \blue{53.84}        & 91.74        & \blue{12.07}        & 62.22        & 94.21        & 13.21        & 56.31        & 63.36        & 50.92        \\
MIRO & 82.77        & 74.10        & 87.33        & 51.81        & 91.83        & 7.72         & 62.22        & \blue{95.59}        & 11.09        & 49.32        & 69.34        & 33.99        \\
\hline
GMDG & 82.51        & \blue{74.87}        & 86.52        & 49.93        & \blue{95.24}        & 0.00         & 59.86        & 93.96        & 7.62         & 53.13        & 69.17        & 40.85        \\
\rowcolor{mygray} \; +L-Reg & \blue{84.54}        & 74.62        & \blue{89.76}        & 49.98        & 92.01        & 3.66         & 61.39        & 95.03        & 9.84         & 53.39        & 69.43        & 41.11        \\
\rowcolor{mygray} \; \textbf{+PL-Reg}  &{79.95}	 &{73.59}	 &{83.29} &{51.20} &	{92.46} &	{5.74} &	{\blue{68.35}} &{88.55} &	{\blue{37.42}} &	{\blue{60.90}} &	{65.93} &	{\blue{57.06}}
\\
\rowcolor{mygray}\; Imp.&
-4.59&	-1.03&	-6.47&	\red{1.22}&	\red{0.45}&	\red{2.08}&	\red{6.97}&	-6.48 &  \red{27.58}&	\red{7.52}&	-3.50&	\red{15.95}
\\
\hline
\end{tabular}%
}
\end{table*}
\begin{table*}[t]
\caption{MDG+GCD results: accuracy scores of each domain in TerraIncognita dataset.
The best results of each domain for each category are highlighted in \blue{blue}; Imp. denotes marginal improvements gained from L-Reg where improvements are highlighted in \red{red}.
}
\label{tab:mdg_gcd_more_results4}
\resizebox{\linewidth}{!}{%
\begin{tabular}{@{}l|ccc|ccc|ccc|ccc}
\hline
 & \multicolumn{3}{c|}{\textbf{Local 100}} & \multicolumn{3}{c|}{\textbf{Local 38}}      & \multicolumn{3}{c|}{\textbf{Local 43}}        & \multicolumn{3}{c}{\textbf{Local 46}}        \\ 
Method & {All}  & {Known}  & {Unknown} & {All}  & {Known}  & {Unknown} & {All}  & {Known}  & {Unknown}        & {All}  & {Known}  & {Unknown}         \\  \hline
ERM & 46.51         & 1.25         & 57.07        & 39.88        & 28.22        & 44.91        & 29.41        & 24.65        & 40.25        & 33.59        & 27.70        & 38.36        \\
PIM & 50.20         & 28.97        & 55.15        & \blue{56.22}        & 19.71        & 71.99       & 46.69        & 47.94        & {43.86}        & 37.88        & \blue{44.64}        & 32.40        \\
MIRO & 52.23         & \blue{51.25}        & 52.46        & 55.54        & 14.73        & \blue{73.16}        & 48.93        & 62.89        & 17.13        & \blue{44.19}        & 30.79        & 55.06        \\
\hline
GMDG & 36.70 & 35.65 & 36.94 & 36.69 & 20.86 & 43.53 & 49.46 & 61.76 & 21.47 & 37.24 & 11.24 & \blue{58.33} \\
\rowcolor{mygray}\; +L-Reg & 51.89 & 38.16 & 55.09 & 41.30 & 22.05 & 49.61 & 50.47 & \blue{65.29} & 16.72 & 39.77 & 33.59 & 44.79 \\
\rowcolor{mygray}\; +\textbf{PL-Reg} &
\blue{54.63}	&31.06&	\blue{60.13}&	53.69&	12.01&	71.69	&\blue{52.24}&	55.82&	\blue{44.07}	&35.18	&27.70 & 41.25
\\
\rowcolor{mygray}\; Imp.&
\red{2.74} &	-7.10 &	\red{5.04} &	\red{12.39} &	-10.05 &	\red{22.08} &	\red{1.76} &	-9.47 &	\red{27.35} &	-4.59 &	-5.88 &	-3.54
\\

\hline
\end{tabular}%
}

\end{table*}

\begin{table*}[t]
\caption{MDG+GCD results: accuracy scores of each domain in DomainNet dataset.
The best results of each domain for each category are highlighted in \blue{blue}; Imp. denotes marginal improvements gained from L-Reg where improvements are highlighted in \red{red}.}
\label{tab:mdg_gcd_more_results5}
\centering
\resizebox{1.\linewidth}{!}{%
\begin{tabular}{@{}l|ccc|ccc|ccc}
\hline
 & \multicolumn{3}{c|}{\textbf{clipart}}      & \multicolumn{3}{c|}{\textbf{info}}         & \multicolumn{3}{c}{\textbf{painting}}   
\\ Method & {All}  & {Known}  & {Unknown} & {All}  & {Known}  & {Unknown} & {All}  & {Known}  & {Unknown}        \\ 
\hline
ERM & 31.04        & 58.32        & 7.15         & 17.94        & 34.71        & 6.85         & 30.59        & 51.82        & 9.34                \\
PIM & 32.01        & 57.38        & 9.80         & 18.80        & 33.56        & 9.03         & 22.22        & 36.62        & 7.80               \\
MIRO & 40.13        & 67.55        & 16.11        & 25.84        & {48.53}        & 10.84        & 37.89        & 62.45        & \blue{13.29}             \\
\hline
GMDG & 40.38        & \blue{70.69}        & 13.84        & 24.96        & 46.50        & 10.72        & 36.29        & 59.80        & 12.75               \\
\rowcolor{mygray}\; +L-Reg & 40.91        & 68.17        & 17.05        & \blue{26.60}        & \blue{49.11}        & 11.71        & 36.82        & 60.76        & 12.85              \\
\rowcolor{mygray}\; +\textbf{PL-Reg} & {\blue{42.38}}&	69.82&	{\blue{18.34}}&	26.03&	46.85&	{\blue{12.26}}&	{\blue{38.01}}&	{\blue{63.05}}&	12.94 
\\
\rowcolor{mygray}\; Imp.&\red{1.46}&	\red{1.66}&	\red{1.30}&	-0.57&	-2.26& \red{0.55}&	\red{1.19}&	\red{2.9}&	\red{0.09}
\\
\hline
 & \multicolumn{3}{c|}{\textbf{quickdraw}}    & \multicolumn{3}{c|}{\textbf{real}}      &
\multicolumn{3}{c}{\textbf{sketch}}
\\ Method & {All}  & {Known}  & {Unknown} & {All}  & {Known}  & {Unknown} & {All}  & {Known}  & {Unknown}        \\ 
\hline
ERM & 8.88         & 12.83        & 4.91         & 17.88        & 31.20        & 4.10         & 29.01        & 56.45        & 8.76         \\
PIM & \blue{9.92}         & \blue{14.73}        & \blue{5.09}         & 29.09        & 53.88        & 3.42         & 32.12        & 59.35        & 12.03        \\
MIRO & 8.06         & 12.12        & 3.98         & 42.19        & \blue{75.49}        & 7.72         & 34.83        & \blue{66.51}        & 11.46        \\
\hline
GMDG & 7.43         & 11.83        & 3.01         & 42.84        & 75.27        & 9.27         & 35.01        & 66.95        & 11.46        \\
\rowcolor{mygray}\; +L-Reg & 9.11         & 13.51        & 4.70         & 42.63        & 74.42        & 9.72         & 34.44        & 65.13        & 11.80        \\
\rowcolor{mygray}\; \textbf{+PL-Reg}& 
8.60& 	13.88& 	3.30& 	{\blue{44.09}}& 	74.47& 	{\blue{12.64}}& 	{\blue{36.12}}& 	65.09& 	{\blue{14.76}}
\\
\rowcolor{mygray}\; Imp.& -0.51& 	\red{0.38}& 	-1.40	& \red{1.46}& 	\red{0.05}& 	\red{2.92}& 	\red{1.69}& 	-0.04& 	\red{2.96}
\\ 
\hline
\end{tabular}%
}
\end{table*}

\subsubsection{Overall results}
 \cref{tab:avg_mdg_gcd_more_results} and  \cref{tab:MDG+GCD_res} overall showcases PL-Reg's effectiveness in tackling the challenge of mDG+GCD. If we take a closer examination,  GMDG+PL-Reg achieves an accuracy of $55.04\%$ for all the classes, the highest among all the methods. These gains highlight PL-Reg’s balanced ability to handle both known and unknown classes effectively, making it the most versatile method in this comparison.

Similar to the phenomenon in GCD tasks, GMDG+PL-Reg achieves $68.37\%$ accuracy for known classes, slightly worse than GMDG+L-Reg~($69.87\%$). This minor trade-off is compensated by its significant improvements in other metrics (e.g., unknown class accuracy). Please refer to \cref{Discussion} for more discussions. 

In terms of unknown classes, GMDG+PL-Reg delivers an unknown class accuracy of $34.43\%$, the highest in the table and substantially better than the previous methods, especially GMDG+L-Reg, demonstrating that PL-Reg is highly effective at discovering and classifying novel categories. The improvements brought by PL-Reg on unknown classes again validate \cref{prop:partial_better_generaliztion}.
The detailed analysis of each dataset is provided in \cref{sec:mDG_GCD_detialed_analysis}. 

\subsubsection{Results across datasets}
\label{sec:mDG_GCD_detialed_analysis}
The results across different domains in various datasets reflect the improved generalization of the model with PL-Reg for unknown classes, as theoretically shown in \cref{prop:partial_better_generaliztion}.  
However, one may already notice that the improvements brought by PL-Reg may not be consistent across all domains for all datasets, though the averaged improvements are evident. This is due to the \textit{no free lunch} theory, where the averaged improvements across all domains may slightly compromise the performance in some specific domains.  

\textbf{PACS dataset.}
In the PACS dataset (as shown in \cref{tab:avg_mdg_gcd_more_results}), PL-Reg generates the highest all classes accuracy of $74.19\%$, significantly outperforming GMDG+L-Reg by $+6.39\%$. This improvement is attributed to substantial gains in unknown class accuracy ($42.69\%$, an increase of $+11.36\%$ compared to GMDG+L-Reg), and known class accuracy ($92.07\%$) is slightly improved compared to GMDG+L-Reg ($91.86\%$). 
Across PACS domains (Art Painting, Cartoon, Photo, Sketch as shown in \cref{tab:mdg_gcd_more_results1}),
GMDG+PL-Reg achieves significant improvements for unknown classes, particularly in domains with high variability, such as Photo and cartoon, boosting evident gains of $+40.71\%$ and $+15.20\%$ in comparison to L-Reg, respectively. 
One may notice in  \cref{tab:mdg_gcd_more_results1} that using GMDG as a backbone may lead to poor performance in the sketch domain.  To further validate PL-Reg, we also apply it to the MIRO backbone on the PACS datasets. Consistent improvements across all unknown classes for all unseen domains can be observed with MIRO+PL-Reg, indicating the efficacy of our proposed method.


\textbf{HomeOffice dataset.}
As observed  in \cref{tab:avg_mdg_gcd_more_results}, GMDG+PL-Reg exhibits significant improvements in the HomeOffice dataset, achieving the highest accuracy of $54.73\%$ for all the classes, with gains of $+2.78\%$ over GMDG+L-Reg. The unknown class accuracy improves to $22.58\%$, an increase of $+4.44\%$ compared to GMDG+L-Reg. Known class accuracy is also competitive at $80.98\%$ with a slight improvement from GMDG+L-Reg $79.74\%$, demonstrating PL-Reg's ability to perform consistently excellent across domains with both known and unknown categories. 
\cref{tab:mdg_gcd_more_results2} further shows that
GMDG+PL-Reg leads to consistent gains across all domains in HomeOffice, achieving the highest improvements in unknown class accuracy in the real-world domain $30.86\%$ with an increase of $+13.27\%$ from GMDG+L-Reg.

\textbf{VLCS dataset.} 
In the VLCS dataset, GMDG+PL-Reg accomplishes an accuracy of $65.10\%$ for all classes, marking an improvement of $+2.78\%$ over GMDG+L-Reg (please refer to \cref{tab:avg_mdg_gcd_more_results}). This gain is also largely due to improvements in unknown class accuracy ($45.87\%$, an increase of $+9.78\%$ in comparison with GMDG+L-Reg). Known class performance remains competitive at $80.13\%$ with a slight drop from GMDG+L-Reg.
Specifically, as shown in \cref{tab:mdg_gcd_more_results3}, PL-Reg outperforms L-Reg across most domains, with significant gains in unknown class accuracy in SUN09 and VOC2007~($+27.58\%$  and $+15.95\%$ form GMDG+L-Reg, respectively). These results suggest PL-Reg’s capability to generalize effectively across diverse domains and ensure consistent classification accuracy for unknown classes.

\textbf{TerraIncognita dataset.}
\cref{tab:avg_mdg_gcd_more_results} shows that
GMDG+PL-Reg delivers the second-best accuracy of $8.93 \%$ for all the classes, a notable improvement of $+3.08\%$ compared to GMDG+L-Reg. Concretely,
unknown class accuracy improves significantly to $54.28\%$, reflecting an increase of $+12.73\%$.
Particularly, \cref{tab:mdg_gcd_more_results4} indicates that 
PL-Reg excels in TerraIncognita’s challenging domains, with the largest improvements in unknown class accuracy observed in Local 100, Local 43, and Local 38 ($+5.04\%$, $+22.08\%$ and $+27.35\%$ from GMDG+PL-Reg, respectively). This phenomenon is consistent with the aforementioned datasets. 

\textbf{DomainNet dataset.}
The DomainNet dataset is a large-scale and diverse dataset consisting of six domains:
clipart, info, painting, quickdraw, real, and sketch. 
The results of DomianNet are exhibited in \cref{tab:avg_mdg_gcd_more_results,tab:mdg_gcd_more_results5}.
In the DomainNet dataset, GMDG+PL-Reg achieves an accuracy of $32.54\%$ for all datasets, an improvement of $+0.79\%$ over GMDG+L-Reg. The method shows evident gains in unknown class accuracy ($12.37\%$, an increase of +$61.07\%$ from GMDG+L-Reg) while maintaining the best performance at $55.53\%$ for known classes. The improvements across all metrics indicate PL-Reg’s capacity to generalize well in complex, large-scale datasets with diverse domains.
For each specific domain (see \cref{tab:mdg_gcd_more_results5}).
GMDG+PL-Reg demonstrates consistent effectiveness across the diverse domains of the DomainNet dataset.
It excels particularly in the clipart, painting, and real domains, where boosts in both known and unknown class accuracies are observed.
For domains like info and sketch, PL-Reg still lifts up the performance of the unknown classes. 
Overall, PL-Reg maintains robust performance across diverse domains, showcasing its adaptability and generalization capabilities in the large-scale dataset.

\subsubsection{Visualizations of PL-Reg on mDG+GCD tasks}

\begin{figure*}[t]
\centering
\includegraphics[width=\linewidth]{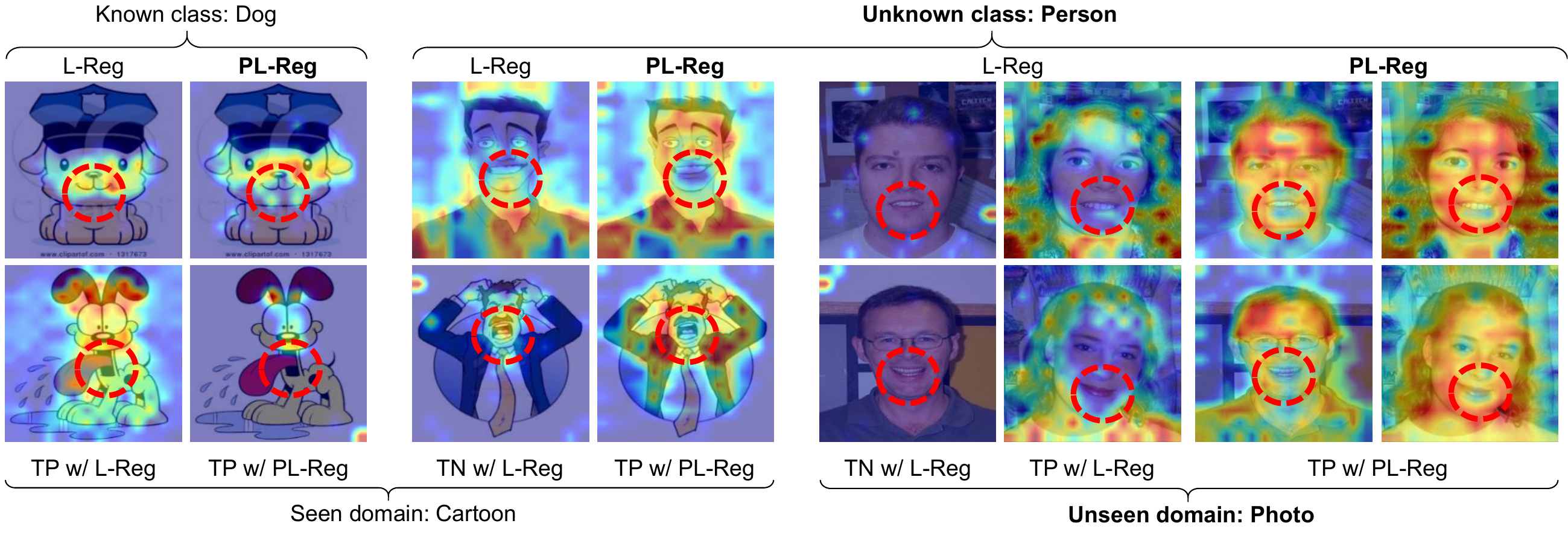}
\caption{
MDG+GCD visualization: GradCAM heat maps visualization of GMDG+L-Reg and GMDG+PL-Reg that trained on PACS datasets while the unseen domain is the photo, respectively. The semantics around mouth areas learned by using the sentential logical method (L-Reg) may lead to misclassified samples and generalization degradation for unknown classes. 
}\label{fig:grad_vis}
\end{figure*}

\begin{figure*}[t]
\centering
\includegraphics[width=\linewidth]{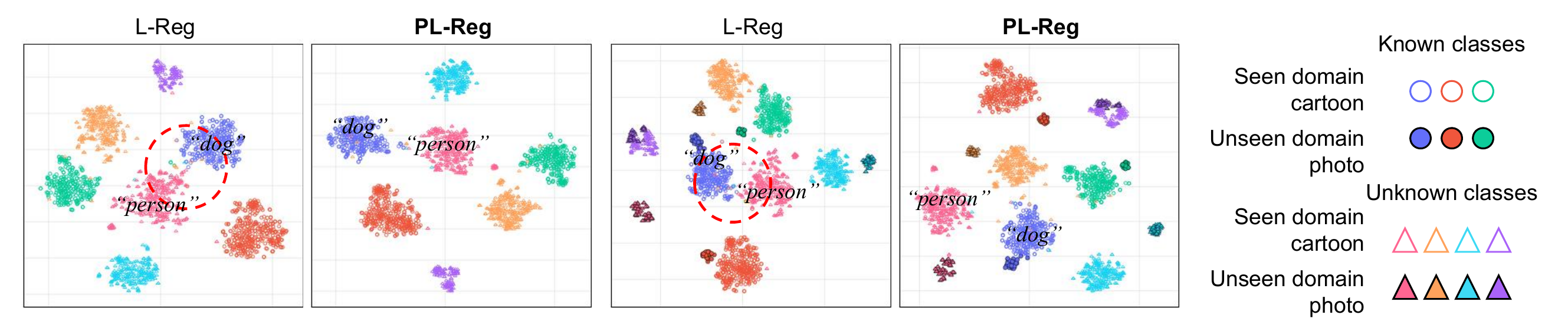}
\caption{
MDG+GCD visualization: Visualization features of GMDG+L-Reg and GMDG+PL-Reg that are trained on PACS datasets while the unseen domain is the photo, respectively.
}\label{fig:tsne}
\end{figure*}

The GradCAM visualization in \cref{fig:grad_vis} reveals critical differences between L-Reg and PL-Reg in learning features for the MDG+GCD task across both seen and unseen domains. 

The model trained with L-Reg exhibits a significant bias towards known classes and seen domains, yielding generalization degradation of unknown classes from unseen domains. 
For instance, in the cartoon domain, the image samples of animals, such as dogs, are usually humanized and have rich facial expressions. 
The heat maps for cartoon dogs indicate that L-Reg focuses heavily on regions like the mouth, a feature that semantically overlaps with cartoon humans. 
Using such overlapped features for classification ultimately results in incorrect classification of unknown classes, such as person classes, due to the model's learned confusion between these shared features.
The overlapped features remain in the model due to L-Reg's grounding sentential logic theory, leading to a leaning towards all features that would be used for classification.

In contrast, PL-Reg demonstrates a more nuanced and class-specific focus, avoiding the overlapping of human-like features in the cartoon domain. For known classes like the cartoon dog, PL-Reg highlights a broader and more context-aware set of features, reducing reliance on human-specific regions such as the mouth. 
Such a phenomenon is triggered by the partial logic-based design, allowing models to maintain undefined formulations. 
As a result, PL-Reg is less likely to misclassify unknown classes like the cartoon person.

Furthermore, in the unseen domain (photo), the features learned by L-Reg for unknown classes in the cartoon domain fail to generalize effectively.
As shown in \cref{fig:grad_vis}, the failed cases are seen in the GradCAM visualizations where attention is scattered and misaligned, while those correctly classified images are those whose middle facial features, such as areas around mouths, are not used. 
In contrast, PL-Reg retains its generalization ability by maintaining broader, more transferable feature representations, enabling it to accurately localize and classify unknown classes like ``person" in the photo domain. Specifically, it can be noticed that the mouth areas are consistently ignored across domains and classes. 

\cref{fig:tsne} further varies the confusion between the known dog and unknown person classes across seen cartoon and unseen photo domains. It can be noticed that, with PL-Reg, clusters of those two categories are more scattered, while L-Reg may lead to a more ambiguous decision boundary.

Combined with quantitative results of mDG+GCG, PL-Reg's advantage in both seen and unseen domains enables robust generalization by avoiding overfitting to shared, domain-specific features. This empirical analysis also supports our main theory in \cref{prop:partial_better_generaliztion}.

\subsection{Experiments on incremental learning}
\label{Experiments on incremental learning}

\subsubsection{Experimental settings}

\begin{table}[t]
\caption{CIL experimental settings: Hyper-parameters of applying PL-Reg to UCIR+LWS.}\label{tab:CIL_hyper_LWS}
\centering
\begin{tabular*}{\linewidth}{@{\extracolsep\fill}lccc}
  & \multicolumn{3}{c}{\textbf{Ordered Long-tailed}} \\  \hline
 & $\omega_{p1}$ & $\omega_{p2}$ &$\omega_{L-Reg}$ \\ \hline
CIFAR100 & 1e-4  & 1e-4 & 1e-3 \\ 
Imagenet-Subset & 1e-3  & 1e-3  & 5e-3 \\ 
\hline
   & \multicolumn{3}{c}{\textbf{Shuffled Long-tailed}} \\  \hline
    & $\omega_{p1}$ & $\omega_{p2}$ &$\omega_{L-Reg}$ \\ \hline
CIFAR100 &  1e-3 &  1e-3 & 1e-3 \\ 
Imagenet-Subset &  1e-3 &  1e-3 & 1e-3\\ 
 \hline
\end{tabular*}%
\end{table}

\begin{table}[t]
\caption{CIL experimental settings: Hyper-parameters of applying PL-Reg to UCIR+GVAlign.}\label{tab:CIL_hyper_GVAlgin}
\centering
\begin{tabular*}{\linewidth}{@{\extracolsep\fill}lccc}
  & \multicolumn{3}{c}{\textbf{Ordered Long-tailed}} \\ \hline
 & $\omega_{p1}$ & $\omega_{p2}$ &$\omega_{L-Reg}$ \\ \hline
CIFAR100 &  1e-4 & 1e-4 & 1e-2 \\ %
Imagenet-Subset &  1e-4 & 1e-4 & 1e-2 \\ %
\hline
   & \multicolumn{3}{c}{\textbf{Shuffled Long-tailed}} \\ \hline
    & $\omega_{p1}$ & $\omega_{p2}$ &$\omega_{L-Reg}$ \\ \hline
CIFAR100 &  1e-3 & 1e-3 & 1e-2 \\ %
Imagenet-Subset &  1e-3 & 1e-3 & 5e-3\\ %
 \hline
\end{tabular*}%
\end{table}

\begin{table*}[t]
\caption{Results of CIL: Results of L-Reg and proposed PL-Reg applied to various CIL backbones on the CIFAR100 dataset. The best results of each backbone are highlighted in \blue{blue}. $^*$ denotes reproduced results.}
\label{tab:cil_cifar100}
\resizebox{\textwidth}{!}{%
\begin{tabular}{lcccccccccccc}
\hline
 & \multicolumn{12}{c}{\textbf{CIFAR100 Ordered Long-tailed}} \\ 
Session & 0 & 1 & 2 & 3 & 4 & 5 & 6 & 7 & 8 & 9 & \multicolumn{1}{l|}{10} & \multicolumn{1}{c}{Avg.} \\ \hline
UCIR $^*$ & 59.7 & 54.2 & 47.7 & 45.9 & 43.2 & 39.3 & 37.7 & 36.0 & 34.8 & 31.8 & \multicolumn{1}{l|}{21.6} & 41.1 \\ \hline
UCIR+LWS$^*$ & 60.1 & 57.5 & 50.0 & 48.8 & 46.7 & 41.8 & 39.9 & 38.4 & 36.5 & 33.5 & \multicolumn{1}{l|}{30.7} & 44.0 \\
\rowcolor{mygray}{ \;+ L\_reg} & 59.2 & 55.5 & 50.2 & 47.7 & 46.2 & 42.2 & 39.7 & 37.7 & 36.0 & 32.6 & \multicolumn{1}{l|}{30.3} & 43.4 \\
\rowcolor{mygray}{ \;\textbf{+ PL\_reg}} &\blue{\textbf{61.0}}&\blue{\textbf{59.1}}&\blue{\textbf{53.4}}&\blue{\textbf{51.8}}&\blue{\textbf{48.6}}&\blue{\textbf{42.8}}&\blue{\textbf{41.2}}&\blue{\textbf{38.8}}&\blue{\textbf{37.4}}&\blue{\textbf{34.4}}&\multicolumn{1}{l|}{\blue{\textbf{31.8}}}&\blue{\textbf{45.5}} \\
\hline
UCIR + GVAlign$^*$ & \blue{66.1} & 60.3 & 54.2 & 53.1 & 51.6 & 45.9 & 42.2 & 40.3 & 39.3 & 37.3 & \multicolumn{1}{l|}{28.7} & 47.2 \\
\rowcolor{mygray}{ \;+ L\_reg} & 64.7 & 62.1 & 54.1 & 52.5 & 49.3 & 46.7 & 44.8 & \blue{43.1} & \blue{41.5} & \blue{38.5} & \multicolumn{1}{l|}{28.3} & 47.8 \\
\rowcolor{mygray}{ \;\textbf{+ PL\_reg}}  & \textbf{65.1} & \blue{\textbf{62.5}} & \blue{\textbf{54.6}} & \blue{\textbf{55.1}} & \blue{\textbf{52.4}} & \blue{\textbf{47.8}} & \blue{\textbf{45.2}} & \textbf{42.9} & \textbf{41.0} & \textbf{37.7} & \multicolumn{1}{l|}{\blue{\textbf{29.8}}} & \blue{\textbf{48.6}} \\ \hline
 & \multicolumn{12}{c}{\textbf{CIFAR100 Shuffled Long-tailed}} \\ 
Session & 0 & 1 & 2 & 3 & 4 & 5 & 6 & 7 & 8 & 9 & \multicolumn{1}{l|}{10} & \multicolumn{1}{c}{Avg.} \\ \hline
UCIR $^*$ & 39.9 & 36.7 & 31.1 & 33.2 & 34.9 & 31.9 & 33.0 & 33.3 & 31.9 & 30.6 & \multicolumn{1}{l|}{31.3} & 33.4 \\ \hline
UCIR+LWS$^*$ & 44.7 & 42.4 & 37.0 & 39.2 & 39.3 & 37.5 & 37.0 & 37.0 & {36.7} & 34.2 & \multicolumn{1}{l|}{34.6} & 38.1 \\
\rowcolor{mygray}{ \;+ L\_reg} & \blue{44.9} & {45.5} & 37.4 & {39.1} & 40.4 & 37.7 & 36.5 & 36.7 & 36.2 & {34.7} & \multicolumn{1}{l|}{{36.0}} & 38.6 \\
\rowcolor{mygray}{ \;\textbf{+ PL\_reg}} &44.6& \blue{\textbf{46.9}}&\blue{\textbf{38.7}}&\blue{\textbf{41.4}}&\blue{\textbf{41.0}}&\blue{\textbf{39.0}}&\blue{\textbf{39.0}}&\blue{\textbf{38.4}}&\blue{\textbf{37.6}}&\blue{\textbf{35.9}}&\multicolumn{1}{l|}{\blue{\textbf{36.5}}}&\blue{\textbf{39.9}}
\\
\hline
UCIR + GVAlign$^*$ & 47.3 & \blue{47.5} & 40.0 & 42.1 & \blue{43.8} & \blue{43.0} & 40.1 & \blue{39.7} & 39.1 & 37.0 & \multicolumn{1}{l|}{\blue{38.5}} & 41.6 \\
\rowcolor{mygray}{ \;+ L\_reg} & \blue{47.9} & 46.8 & 40.8 & \blue{43.1} & 43.5 & 42.4 & 40.6 & 39.6 & 39.3 & \blue{37.5} & \multicolumn{1}{l|}{37.1} & 41.7 \\
\rowcolor{mygray}{ \;\textbf{+ PL\_reg}}  & \textbf{47.1} & \textbf{47.3} & \blue{\textbf{42.1}} & \textbf{42.0} & \textbf{43.4} & \textbf{42.4} & \blue{\textbf{41.4}} & \blue{\textbf{39.7}} & \blue{\textbf{39.8}} & \textbf{37.2} & \multicolumn{1}{l|}{\textbf{38.1}} & \blue{\textbf{41.9}} \\ \hline
\end{tabular}%
}
\end{table*}

\begin{table*}[t]
\caption{Results of CIL: Results of L-Reg and proposed PL-Reg applied to various CIL backbones on the ImageNet-Subset dataset. The best results of each backbone are highlighted in \blue{blue}. $^*$ denotes reproduced results.}
\label{tab:cil_imagenet}
\resizebox{\textwidth}{!}{%
\begin{tabular}{lcccccccccccc}
\hline
 & \multicolumn{12}{c}{\textbf{ImageNet-Subset Ordered Long-tailed}} \\ 
\multicolumn{1}{l}{Session} & 0 & 1 & 2 & 3 & 4 & 5 & 6 & 7 & 8 & 9 & \multicolumn{1}{c|}{10} & Avg. \\ \hline
UCIR$^*$ & 69.1 & 63.6 & 61.6 & 57.5 & 52.0 & 51.1 & 45.4 & 45.0 & 39.7 & 38.2 & \multicolumn{1}{c|}{37.5} & 51.0 \\ \hline
{UCIR+LWS$^*$} & \blue{71.4} & {64.1} & {61.8} & \blue{61.0} & {54.2} & {52.7} & {46.4} & {45.9} & {43.2} & {41.7} & \multicolumn{1}{c|}{{41.3}} & { 53.1} \\
\rowcolor{mygray}{ \;+ L\_reg} & {71.1} & {61.7} & {59.9} & {59.7} & {54.6} & {52.7} & { 46.4} & { 45.7} & \blue{43.3} & {41.8} & \multicolumn{1}{c|}{{41.3}} & { 52.6} \\ 
\rowcolor{mygray}{\;\textbf{+ PL\_reg}} &71.0&\blue{\textbf{65.2}}&\blue{\textbf{62.9}}&58.1&\blue{\textbf{55.4}}&\blue{\textbf{53.6}}&\blue{\textbf{47.4}}&\blue{\textbf{48.0}}&\blue{\textbf{43.3}}&\blue{\textbf{42.4}}&\multicolumn{1}{c|}{\blue{\textbf{41.9}}}&\blue{\textbf{53.6}}\\
\hline
{UCIR + GVAlign$^*$} & {72.6} & {68.7} & {65.8} & {61.7} & {56.3} & {51.6} & {48.0} & {48.0} & {44.5} & {43.3} & \multicolumn{1}{c|}{{42.4}} & {54.8} \\
\rowcolor{mygray}{\; + L\_reg} & {73.5} & \blue{70.4} & \blue{66.6} & {62.6} & {59.0} & {54.5} & {49.4} & {49.4} & {47.3} & {44.4} & \multicolumn{1}{c|}{{43.5}} & {56.4} \\
\rowcolor{mygray}{\;\textbf{+ PL\_reg}} & \blue{\textbf{74.0}} & {\textbf{68.4}} & {\textbf{66.2}} & \blue{\textbf{64.2}} & \blue{\textbf{60.2}} & \blue{\textbf{55.6}} & \blue{\textbf{50.3}} & \blue{\textbf{50.1}} & \blue{\textbf{47.8}} & \blue{\textbf{45.5}} & \multicolumn{1}{c|}{\blue{\textbf{43.8}}} & \blue{\textbf{56.9}} \\ \hline
\multicolumn{1}{c}{} & \multicolumn{12}{c}{\textbf{ImageNet-Subset Shuffled Long-tailed}} \\ 
\multicolumn{1}{l}{Session} & 0 & 1 & 2 & 3 & 4 & 5 & 6 & 7 & 8 & 9 & \multicolumn{1}{c|}{10} & Avg. \\ \hline
UCIR$^*$ & 52.4 & 54.8 & 56.8 & 50.6 & 46.6 & 44.4 & 37.9 & 39.9 & 37.5 & 36.2 & \multicolumn{1}{c|}{36.8} & 44.9 \\ \hline
UCIR+LWS$^*$ & 58.3 & 53.4 & 55.3 & 53.0 & 47.8 & 46.5 & 41.1 & 42.5 & 40.8 & 39.6 & \multicolumn{1}{c|}{39.4} & 47.1 \\
\rowcolor{mygray}{\;+ L\_reg} & 57.5 & \blue{56.7} & \blue{56.5} & 53.7 & {50.3} & 47.0 & 41.7 & 43.7 & 41.7 & 40.7 & \multicolumn{1}{c|}{{40.8}} & 48.2 \\
\rowcolor{mygray}{\;\textbf{+ PL\_reg}}&\blue{\textbf{58.6}}&\textbf{55.8}&\textbf{56.2}&\blue{\textbf{54.7}}&\blue{\textbf{51.2}}&\blue{\textbf{49.3}}&\blue{\textbf{43.4}}&\blue{\textbf{44.1}}&\blue{\textbf{42.4}}&\blue{\textbf{41.6}}& \multicolumn{1}{c|}{\blue{\textbf{41.5}}}&\blue{\textbf{49.0}}\\
\hline
UCIR + GVAlign$^*$ & \blue{60.6} & 58.1 & \blue{60.4} & 52.6 & 48.0 & 47.1 & 41.1 & 43.5 & \blue{41.3} & 40.6 & \multicolumn{1}{c|}{41.1} & 48.6 \\
\rowcolor{mygray}{\;+ L\_reg}  & 59.6 & 59.2 & 58.9 & 52.6 & \blue{50.0} & 47.3 & 40.7 & 43.7 & 40.7 & \blue{41.2} & \multicolumn{1}{c|}{\blue{41.2}} & 48.6 \\
\rowcolor{mygray}{\;\textbf{+ PL\_reg}} & \textbf{60.3} & \blue{\textbf{59.4}} & \textbf{60.1} & \blue{\textbf{53.9}} & \textbf{49.9} & \blue{\textbf{48.3}} & \blue{\textbf{42.8}} & \blue{\textbf{44.7}} & \textbf{40.9} & \textbf{40.3} & \multicolumn{1}{c|}{\textbf{40.7}} & \blue{\textbf{49.2}} \\ \hline
\end{tabular}%
}
\end{table*}

We conduct experiments on two class incremental learning settings based on long-tailed distributions, i.e., \textit{Ordered} Long-tailed CIL and \textit{Shuffled} Long-tailed CIL~\cite{liu2022long}.  \textit{Ordered} Long-tailed CIL presents a scenario where all classes are arranged based on the number of samples per class, and these classes are subsequently divided into sequential tasks. \textit{Shuffled} Long-tailed CIL randomly assigns classes across tasks, where the data in each task follows an imbalanced distribution. \textit{Shuffled} Long-tailed CIL can be considered a more challenging yet realistic case in the real world.


\textbf{Competitors.}
We compare our approach against two prevailing  incremental learning methods, LWS~\cite{liu2022long} and GVAlign~\cite{kalla2024robust}\footnote{We reproduce their results based on the official released repository for fair comparisons.}.
By integrating these methods into the baseline model UCIR~\cite{hou2019learning}, we evaluate and compare their average performance across 11 incremental tasks.

\textbf{Datasets.} 
Following prior studies~\cite{kalla2024robust, liu2022long}, we conduct experiments on CIFAR100~\cite{krizhevsky2009learning} and the ImageNet-Subset (100 classes)~\cite{krizhevsky2012imagenet}. Each dataset is divided into 11 incremental tasks, with the first task containing 50 classes and the remaining 10 tasks consisting of 5 classes each. The classes across tasks are mutually exclusive. The training data is highly imbalanced, with a data imbalance ratio of $0.01$, while the testing data is balanced to ensure fair evaluation.

\textbf{Evaluation metric.}
Following previous works~\cite{kalla2024robust, liu2022long}, we evaluate the average accuracy across 11 incremental tasks.

\textbf{Training details.}
To validate the effectiveness of PL-Reg, we parallelly produced the results of L-Reg by applying them to previous UCIR+LWS and UCIR+GVAlign, respectively. 
In terms of PL-Reg, partial logic masks, and classifications are applied to the latent features of the visual encoders, and these masked latent features are subsequently utilized for the application of L-Reg.
The sole L-Reg is applied to the same latent features but without partial masks and classifications. 
The parameters of the layers of generating partial logic masks are initialized for each session. 
The models are trained with the aforementioned datasets following the ordered and shuffled long-tailed CIL settings~\cite{kalla2024robust, liu2022long}. 
Due to the added regularization, we extend the second stage of fine-tuning in UCIR to $50$ epochs to ensure convergence of the model.  
The losses proposed in UCIR+LWS or UCIR+GVAlign are treated as 
$L_{main}$. The values of hyper-parameters used for training are exhibited in \cref{tab:CIL_hyper_LWS,tab:CIL_hyper_GVAlgin}, respectively.


\begin{figure*}[t]
\centering
\includegraphics[width=\linewidth]{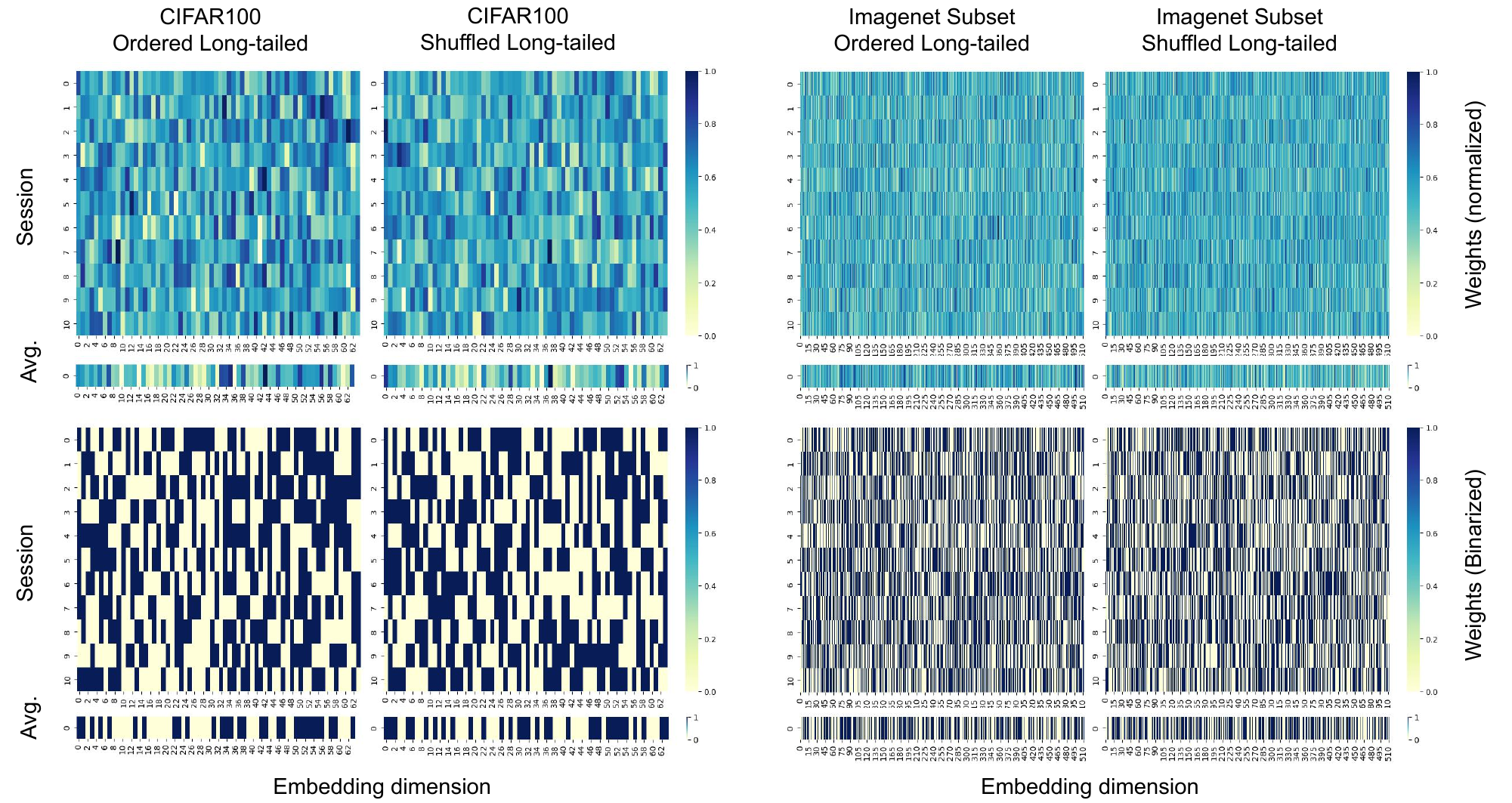}
\caption{
CIL analysis:
Visualizations of partial logic mask generator weights of each CIL session of LUCIR+LWS+PL-Reg. The top row displays heat maps of normalized weights, while the bottom row shows binarized normalized weights, with values greater than 0.5 set to $1$ and others set to $0$. Avg. denotes the averaged weights across all sessions.
}\label{fig:CIL_weights}
\end{figure*}

\subsubsection{Results of CIFAR100}
As shown in \cref{tab:cil_cifar100},
PL-Reg consistently improves average accuracy and session-specific performance across both CIFAR100 settings (\textit{Ordered} and \textit{Shuffled} long-tail CIL), while L-Reg may not, reinforcing its effectiveness in enhancing existing CIL backbones. 
These improvements again validate our \cref{prop:partial_better_generaliztion}.

\textbf{Results of CIFAR100: Ordered long-tailed.}
For the ordered long-tailed setting of the CIFAR100 dataset, using L-Reg may not certify an improvement from the bare baseline, e.g., a slight decrease is observed while applying L-Reg to UCIR+LWS for ordered long-tailed CIL.
Moreover, PL-Reg consistently outperforms L-Reg in improving the performance of both UCIR+LWS and UCIR+GVAlign backbones. For UCIR+LWS, PL-Reg increases the average accuracy from $43.4\%$ (solely with L-Reg) to $45.5\%$, demonstrating incremental improvements across all sessions. 
Similarly, for UCIR+GVAlign, PL-Reg improves the average accuracy from $47.8\%$ (solely with L-Reg) to $48.6\%$, with notable gains in earlier sessions and the last sessions. 
These results highlight the superiority of PL-Reg over L-Reg in the transferability of adapting to unknown classes. 

\textbf{Results of CIFAR100: Shuffled long-tailed.}
On CIFAR100, under the shuffled long-tailed setting, both PL-Reg and L-Reg improve the performance of two baselines.
PL-Reg still demonstrates better effectiveness than L-Reg in addressing class imbalance and enhancing generalization. For UCIR+LWS, PL-Reg improves the average accuracy from $38.6\%$ (with L-Reg) to $39.9\%$. For UCIR+GVAlign, PL-Reg raises the average accuracy from $41.7\%$ (with L-Reg) to $41.9\%$. 
These results indicate that PL-Reg surpasses L-Reg in stabilizing performance under shuffled distributions.

\begin{table*}[t]
\caption{Sensitive analysis of hyper-parameters: Results of Ordered long-tailed CIL task on ImageNet-Subset dataset. The final parameters adopted are highlighted.}
\label{tab:sensitive_analysis}
\resizebox{\textwidth}{!}{%
\begin{tabular}{c|ccc|c|ccccccccccc}
 \hline
\multicolumn{1}{l}{} & \multicolumn{1}{l}{} & \multicolumn{1}{l}{} & \multicolumn{1}{l}{} & \multicolumn{1}{l}{} & \multicolumn{11}{c}{Sessions} \\ \hline
UCIR+LWS & $w_{p1}$ & $w_{p2}$ & $w_{L-Reg}$ & Avg. & 0 & 1 & 2 & 3 & 4 & 5 & 6 & 7 & 8 & 9 & 10 \\ \hline
+L-Reg & - & - & 5e-3 & 52.6 & 71.1 & 61.7 & 59.9 & 59.7 & 54.6 & 52.7 & 46.4 & 45.7 & 43.3 & 41.8 & 41.3 \\ \hline
\rowcolor{mygray}
\textbf{+PL-Reg (Final)} & {1e-3} & 1e-3 & 5e-3 & 53.6 & 71.0 & 65.2 & 62.9 & 58.1 & 55.4 & 53.6 & 47.4 & 48.0 & 43.3 & 42.4 & 41.9 \\
 & \textbf{1e-3} & 5e-4 & 5e-3 & 53.2 & 71.5 & 67.1 & 63.5 & 60.0 & 54.4 & 51.4 & 46.1 & 47.0 & 42.1 & 41.4 & 40.5 \\
 & \textbf{1e-3} & 2e-3 & 5e-3 & 53.2 & 72.2 & 64.9 & 61.9 & 59.3 & 53.9 & 52.5 & 46.2 & 47.0 & 43.8 & 41.7 & 41.5 \\ 
 & 5e-4 & \textbf{1e-3} & 5e-3 & 52.2 & 71.2 & 62.7 & 61.1 & 57.3 & 53.9 & 51.4 & 45.1 & 45.8 & 42.8 & 41.4 & 41.2 \\
\multirow{-4}{*}{+PL-Reg} & 2e-3 & \textbf{1e-3} & 5e-3 & 53.3 & 71.6 & 66.0 & 60.8 & 59.6 & 53.9 & 53.0 & 46.9 & 47.3 & 43.0 & 42.1 & 42.1 \\ \hline
\end{tabular}%
}
\end{table*}

\subsubsection{Results of ImageNet-Subset}
Similar to CIFAR100, PL-Reg consistently outperforms L-Reg by providing steady improvements in average accuracy and session-wise performance across both ImageNet-Subset settings,  

\textbf{Results of ImageNet-Subset: Ordered long-tailed.}
On the ImageNet-Subset Ordered Long-tailed dataset, PL-Reg exhibits clear advantages over L-Reg by further boosting performance across sessions. For UCIR+LWS, PL-Reg improves the average accuracy from $52.6\%$ (with L-Reg) to $53.6\%$. Similarly, for UCIR+GVAlign, PL-Reg raises the average accuracy from $56.4\%$ solely with L-Reg to $56.9\%$, specifically achieving substantial improvements in mid-to-late sessions.

\textbf{Results of ImageNet-Subset: Shuffled long-tailed.}
On the ImageNet-Subset Shuffled Long-tailed dataset, PL-Reg again demonstrates superiority over L-Reg. For UCIR+LWS, PL-Reg increases the average accuracy from $48.2\%$ (with L-Reg) to $49.0\%$, with increases observed for almost all sessions. 
For UCIR+GVAlign, PL-Reg achieves a more significant performance boost, improving the average accuracy from $49.2\%$ (with L-Reg) to $48.6\%$. 
These results highlight PL-Reg's ability to handle class imbalance and stabilize performance in comparison to L-Reg, particularly in more challenging shuffled settings.

\subsubsection{Visualizations of PL-Reg on CIL tasks}

\cref{fig:CIL_weights} presents visualizations of the weights of the partial logic mask generator for different CIL sessions across different datasets under both ordered long-tailed and shuffled long-tailed settings. The top row displays heat maps of normalized weights, while the bottom row illustrates binarized weights, where values above $0.5$ are set to $1$, and others are set to $0$. 
The bottom row highlights the differences of weights across sessions.

In all four settings, the distinct weight patterns across sessions suggest that the mask generator allows the model to capture unique feature subsets for each session. 
This weighting pattern leads to session-specific masks,
saving different features for different sessions, which is especially beneficial for mitigating catastrophic forgetting in CIL under both long-tailed settings (please refer to results in quantitative results in \cref{tab:cil_cifar100,tab:cil_imagenet}).
Moreover, the averaged weights across all sessions exhibit variations for each dimension, suggesting that there is still reserved room for future incremental classes. 

Overall, the distinctiveness of the partial logic mask generator's weights across sessions under both ordered and shuffled long-tailed settings confirms its effectiveness in facilitating feature retention and adaptability for unknown classes, which supports our \cref{prop:partial_better_generaliztion}.

\subsubsection{Sensitive analysis}

\cref{tab:sensitive_analysis} presents the results of the Ordered Long-Tailed Class Incremental Learning (CIL) task on the ImageNet-Subset dataset, showing the impact hyper-parameter settings of $w_{p1}, w_{p2}, w_{L-Reg}$ applied to baseline UCIR+LWS. The results of UCIR+LWS+L-Reg are exhibited for better comparison. 

The results shows that using PL-Reg would exceed UCIR+LWS+L-Reg under most hyper-parameter settings. 
It can be noticed that different values of $w_{p1}$  may influence the model's performance.
Using a relatively smaller value of $w_{p1}$ such as $5e-4$ may lead to significantly decreased performance as it may affect the convergence of partial logic classification. 
Therefore, a relatively larger value of $w_{p1}$ would be recommended. 
The impact of $w_{p2}$ appears to be more nuanced and the performance are relatively stable. 
Based on the observed results, the combination of $w_{p1}= 1e-3$ and $w_{p1}= 1e-3$ offers the best overall performance.

\section{Conclusion and discussion}
\label{Discussion}
This paper argues the challenge of generalization in visual classification tasks involving unknown classes can be deeply connected to the scope of partial logic. Grounded in partial logic theory, 
the problems of GCD, mDG+GCD, and CIL tasks are reformed and the novel Partial Logic Regularization (PL-Reg) is proposed.
Unlike existing approaches such as L-Reg~\cite{tan2024interpret}, which are implicitly based on sentential logic and struggle with scenarios requiring undefined logic terms, especially in CIL tasks, PL-Reg allows models to maintain undefined logical terms during training by reserving capacity for unknown classes to be resolved in future stages. 
The efficacy of PL-Reg is validated in extensive experiments where significant improvements in unknown classes can be observed.

\textbf{Limitations.}
As one may have already noticed, the improvements brought by PL-Reg may be accompanied by compromises in the performance of known classes. The most severe case can be observed in  the TerraIncognita dataset in the mDG+GCD task. Such shortage is inherent in L-Reg, where both PL-Reg and L-Reg preassume the independence of each dimension of the semantic embeddings. 
Future work may focus on relaxing this presumption or developing backbones that can produce semantic embeddings to meet this presumption.

\textbf{Connections to gating mechanism.} This paper suggests an implicit connection between partial logic and the gating mechanism that has been broadly used in networks, such as the Long short-term memory architecture (LSTM)~\cite{hochreiter1997long} employing the forget gates. 
Those gates may help with building partial logic but may not ensure the construction of the sentential logic on the defined logical part. 
These connections show a promising approach for improving such gate-based methods and will be explored in the future. 





\textbf{Future work.}
Both PL-Reg and previous L-Reg are proposed under the discrete $Y$ tasks such as classification. Extending the theoretical results to those continuous $Y$ regression tasks may be possible.
Besides, the theoretical results and applications of the PL-Reg may be extended to language models, such as the discussed LSTM, or further.





\backmatter








\section*{Declarations}
All datasets used in this paper are publicly available.

\bibliographystyle{splncs04}
\bibliography{main.bib}

\end{document}